\documentclass[final]{l4dc2026}

\usepackage{url}            
\usepackage{booktabs}       
\usepackage{amsfonts}       
\usepackage{nicefrac}       
\usepackage{microtype}      

\usepackage{algorithm}  

\usepackage{amssymb}		
\usepackage{graphicx}		
\usepackage{amsmath}
\usepackage{algorithmic}
\usepackage{comment}
\usepackage{multirow}
\usepackage{enumitem}
\usepackage{subcaption}
\usepackage{tikz}
\usetikzlibrary{arrows.meta}
\usepackage[dvipsnames,table,xcdraw]{xcolor}

\newtheorem*{assumption*}{Assumption}

\DeclareMathOperator*{\E}{\mathbb{E}}

\usepackage{amsmath}  
\usepackage{xspace}   

\usepackage{pifont}
\usepackage{float}

\usepackage{amsmath,amssymb,mathtools}

\newcommand{\KL}{\mathrm{KL}}         

\usepackage[nameinlink]{cleveref}  
\crefname{theorem}{Theorem}{Theorems}
\Crefname{theorem}{Theorem}{Theorems}

\crefname{lemma}{Lemma}{Lemmas}
\Crefname{lemma}{Lemma}{Lemmas}

\crefname{proposition}{Proposition}{Propositions}
\Crefname{proposition}{Proposition}{Propositions}

\crefname{assumption}{Assumption}{Assumptions}
\Crefname{assumption}{Assumption}{Assumptions}

\crefname{algorithm}{Algorithm}{Algorithms}
\Crefname{algorithm}{Algorithm}{Algorithms}

\crefname{figure}{Figure}{Figures}
\Crefname{figure}{Figure}{Figures}

\crefname{corollary}{Corollary}{Corollaries}
\Crefname{corollary}{Corollary}{Corollaries}

\crefname{remark}{Remark}{Remarks}
\Crefname{remark}{Remark}{Remarks}

\crefname{definition}{Definition}{Definitions}
\Crefname{definition}{Definition}{Definitions}

\crefname{section}{Section}{Sections}
\Crefname{section}{Section}{Sections}

\crefname{subsection}{Section}{Sections}
\Crefname{subsection}{Section}{Sections}

\newcommand{\TV}{\mathrm{TV}}
\newcommand{\R}{\mathbb{R}}
\newcommand{\Sset}{\mathcal{S}}
\newcommand{\Aset}{\mathcal{A}}

\title[Locality in MARL]{A Unified Framework for Locality in Scalable MARL}
\usepackage{times}

\newcommand{\CU}{\textsuperscript{\textdagger}}      
\newcommand{\INRIA}{\textsuperscript{\textdaggerdbl}}  

\author{%
  \Name{Sourav Chakraborty}\CU \Email{sourav.chakraborty@colorado.edu}\\
  \Name{Amit Kiran Rege}\CU\footnote{Equal Contribution with S. Chakraborty.} \Email{amit.rege@colorado.edu}\\
  \Name{Claire Monteleoni}\CU\INRIA \Email{claire.monteleoni@colorado.edu}\\
  \Name{Lijun Chen}\CU \Email{lijun.chen@colorado.edu}\\
  \addr\CU University of Colorado, Boulder, USA.\\
  \addr\INRIA INRIA Paris, France.%
}

\begin{document}
\maketitle

\begin{abstract}
Scalable methods for networked multi-agent reinforcement learning let each agent plan using only a small neighborhood of the agent graph. This works only when the system is value-local, meaning a perturbation at one agent affects the long-run value at another agent weakly when the two are far apart. In the average-reward setting, the standard way to certify locality is the Dobrushin row-sum bound on a single matrix $C^\pi$ that captures how each agent's next state depends on each other agent's current state. To make this matrix easy to work with, prior work bounds it by a supremum over joint actions. The resulting bound is independent of the policy, but it is loose whenever the policy never picks the worst-case action. We split $C^\pi$ into pieces that separately track environment sensitivity and policy sensitivity, $C^\pi \preceq E^{\mathrm s}+E^{\mathrm a}\Pi(\pi)$, where $E^{\mathrm s}$ measures how the next state moves with the current state, $E^{\mathrm a}$ measures how it moves with the current action, and $\Pi(\pi)$ measures how reactive the policy is to changes in state. The spectral radius of $H^\pi := E^{\mathrm s}+E^{\mathrm a}\Pi(\pi)$ then controls the decay of the average-reward Poisson solution, and the spectral certificate $\rho(H^\pi)<1$ is strictly weaker than the row-sum condition $\|H^\pi\|_\infty<1$ on the same matrix and applies in regimes where policy-independent action-supremum bounds used in prior Dobrushin-style work cannot. For temperature-$\tau$ softmax policies we get $\Pi(\pi)\le L/(2\tau)$, so the softmax temperature directly controls locality. We use this decay result to give a deterministic oracle guarantee for a block-coordinate KL-proximal policy-improvement template whose truncation bias decays exponentially in the message-passing radius $\kappa$.
\end{abstract}

\begin{keywords}%
  Multi-agent Reinforcement Learning
\end{keywords}

\section{Introduction}

Cooperative multi-agent reinforcement learning on networked systems faces a curse of dimensionality: the joint state and action spaces grow exponentially in the number of agents $n$, so even when the model factors into local interactions, centralized planning is infeasible~\citep{complexity_blondel2000survey, complexity_papadimitriou1999complexity}. Scalable networked MARL methods address this by letting each agent plan using only a $\kappa$-hop neighborhood of the agent graph~\citep{qu_scalabale_marl_2020, qu2019scalable, lin2020distributed}. Their complexity scales with neighborhood size rather than network size. They are sound, though, only when the system itself is local, meaning that a perturbation at one agent affects the long-run value at a distant agent by an amount that decays exponentially in their graph distance. We call this property \emph{value-locality}. When it holds, $\kappa$-hop truncation costs an error that vanishes exponentially in $\kappa$.

So the question is when value-locality holds. In the standard $\gamma$-discounted setting it is automatic, because the discount factor $\gamma<1$ multiplies every step of the Bellman backup and produces decay regardless of the agent interaction structure. The average-reward setting ($\gamma=1$) has no such temporal multiplier. The role of the value function there is played by the \emph{bias function} $h^\pi$, which measures, for each starting state $s$, the long-run advantage of starting in $s$ relative to starting from the stationary distribution of $\pi$. It is defined up to an additive constant by the average-reward Poisson equation $h^\pi-T^\pi h^\pi=r^\pi-\bar r^\pi$, where $T^\pi$ is the one-step expectation operator under $\pi$ and $\bar r^\pi$ is the average reward. Whatever decay $h^\pi$ has must come from the decay of one-step influence under $T^\pi$. Existing average-reward guarantees~\citep{qu_scalabale_marl_2020} get this via a Dobrushin coupling condition. The idea is to build an $n\times n$ matrix $C^\pi$ whose entry $C^\pi_{j\leftarrow i}$ is the largest change a perturbation of $s_i$ can cause in the next-state distribution of agent $j$. If every row sum of $C^\pi$ is below one, then $T^\pi$ is a contraction in the seminorm $\|\delta(f)\|_\infty=\max_i \sup_{x_{-i}=y_{-i}}|f(x)-f(y)|$, the largest single-coordinate oscillation of $f$, and iterating the contraction gives exponential decay of $h^\pi$ in graph distance.

The Dobrushin condition is convenient because the bound on $C^\pi$ does not depend on the policy: one takes a supremum over joint actions when measuring how much an action change moves the next state. That is also why the condition is conservative. Consider two agents in which agent~$2$'s next state copies agent~$1$'s last action, regardless of state. The worst-case action move is maximal, so the supremum says the system is globally coupled. But if agent~$1$'s policy barely reacts to its own state, then flipping $s_1$ only weakly moves the distribution of $a_1$, which only weakly moves the next-state marginal of agent~$2$. In closed loop, $s_1$ and $s'_2$ are nearly independent, so the system is value-local. The Dobrushin test cannot see this because it threw the policy away.

To recover that information, we decompose the policy-induced one-step interdependence matrix into an environment piece and a policy piece. Using the total variation distance $\TV(\mu,\nu)=\tfrac12\sum_x|\mu(x)-\nu(x)|$, define $E^{\mathrm s}_{j\leftarrow i}$ as the worst-case TV between $P_j(\cdot\mid s,a)$ and $P_j(\cdot\mid s',a)$ over pairs $(s,s')$ that differ only on coordinate $i$, with the action fixed. Define $E^{\mathrm a}_{j\leftarrow i}$ the same way for an action change with the state fixed. Define $\Pi_{k\leftarrow i}(\pi)$ as the TV-sensitivity of agent $k$'s action distribution to a change in $s_i$. Of these, $E^{\mathrm s}$ and $E^{\mathrm a}$ depend only on the environment; $\Pi(\pi)$ depends on the policy. Our first result (Proposition~\ref{prop:decomposition}) is the entrywise bound
\[
C^\pi \ \preceq\ E^{\mathrm s} + E^{\mathrm a}\,\Pi(\pi).
\]
The product $E^{\mathrm a}\Pi(\pi)$ makes the cancellation visible. The action channel $E^{\mathrm a}$ can be large, but if the policy is smooth ($\Pi(\pi)$ small) the closed-loop influence is small anyway. Write $H^\pi := E^{\mathrm s}+E^{\mathrm a}\Pi(\pi)$. Our second result (Theorem~\ref{thm:poisson}) is that whenever the spectral radius $\rho(H^\pi)$ (the largest eigenvalue magnitude of $H^\pi$) is below one, the Poisson solution has $\delta(h^\pi)\le (I-(H^\pi)^\top)^{-1}\delta(r^\pi)$, where the right-hand side is the matrix-geometric (Neumann) series $\sum_{t\ge 0}((H^\pi)^\top)^t\delta(r^\pi)$. Our certificate is strictly weaker than prior ones in two senses. First, on the same comparison matrix, replacing the row-sum condition by the spectral-radius condition is strictly weaker since $\rho(M)\le\|M\|_\infty$ for any nonnegative $M$. Second, $H^\pi$ is policy-dependent and can be much smaller than the policy-independent action-supremum bounds used in prior Dobrushin-style guarantees, so the resulting certificate applies in regimes where those prior conditions are silent.

The policy factor $\Pi(\pi)$ is something a learning algorithm has control over. For temperature-$\tau$ softmax policies~\citep{Geist2019ATO, HaarnojaZAL18}, the policy class used in entropy-regularized control and KL-proximal updates, Lemma~\ref{lem:Pi-softmax} gives $\Pi_{k\leftarrow i}(\pi)\le \min\{1,\,L_{k\leftarrow i}/(2\tau)\}$, where $L_{k\leftarrow i}$ is the coordinatewise Lipschitz constant of the logit. Raising $\tau$ therefore directly tightens the certificate on $H^\pi$, and Section~\ref{sec:softmax} works out the tradeoff between certifying locality (higher $\tau$) and approaching the unregularized optimum (lower $\tau$).

Section~\ref{sec:algo} uses the decay result to give a deterministic oracle guarantee for a block-coordinate KL-proximal improvement template. The same Neumann tail $\lambda^{\kappa+1}$ appears in two places: in the locally computable certificate an agent uses to decide whether $\kappa$ hops are enough, and in the per-step improvement bound. The certificate is genuinely local. The truncated Poisson surrogate used in the analysis is an oracle object in the general model; turning it into something locally computable needs additional structure on observation scopes or function approximation, which we leave to follow-up work. We summarize related work next; the appendix has more.

\paragraph{Related work.}\label{sec:rel}
Exponential decay of value on networked MDPs was first studied in the scalable networked MARL line of work~\citep{qu_scalabale_marl_2020, qu2019scalable, lin_scalable_marl_stochastic_2021, lin2020distributed}, which proves discounted and average-reward locality results under a graph-local transition assumption $P_i(s'_i\mid s_{N_i},a_i)$ and a Dobrushin row-sum bound on $C^\pi$. Our setting allows $P_i$ to depend on the full $(s,a)$ and assumes no graph upfront, since we derive a graph from the support of $H^\pi$ after the fact. Older work on factored MDPs~\citep{kearns1999efficient, factor_guestrin2003efficient} and weakly coupled MDPs~\citep{weakly_mdp_meuleau1998solving} uses different forms of locality, typically asking for local rewards or independent transitions. The policy-dependent angle here parallels work that ties decay rates to policy regularization in single-agent control, but to our knowledge gives the first policy-dependent spectral certificate of value-locality for average-reward networked MARL. Function-approximation MARL methods~\citep{zhang2018fully,lowe2017multi} and independent learners~\citep{tan1993multi,matignon2012independent} attack scalability differently and do not certify locality. Appendix~\ref{app:rel} expands on this.

\section{Setup and preliminaries} \label{sec:setup}

Consider a system of $n$ agents. Each agent $i\in[n]$ has a finite state space $\Sset_i$ and a finite action space $\Aset_i$. The joint spaces are $\Sset=\prod_i \Sset_i$ and $\Aset=\prod_i \Aset_i$, with elements written $s=(s_1,\dots,s_n)$ and $a=(a_1,\dots,a_n)$, and $s_{-i}$, $a_{-i}$ for the profiles that exclude agent $i$.

At time $t$ the joint state is $s^t$, a joint action $a^t\sim\pi(\cdot\mid s^t)$ is drawn, and the next state $s^{t+1}\sim P(\cdot\mid s^t,a^t)$. We assume the kernel factors as $P(s'\mid s,a)=\prod_{i=1}^n P_i(s'_i\mid s,a)$, so that given $(s,a)$ the next-state coordinates are conditionally independent across agents. Each $P_i$ is still allowed to depend on the full $(s,a)$. This is more permissive than the typical networked-MARL assumption $P_i(s'_i\mid s_{N_i},a_i)$ used in \citep{qu_scalabale_marl_2020, lin_scalable_marl_stochastic_2021}, which fixes a graph $N_i$ ahead of time. We do not, since we want the locality structure to come out of the analysis rather than the modeling assumptions.

Policies are of product form $\pi(a\mid s)=\prod_{i=1}^n \pi_i(a_i\mid s_{O_i})$, where the observation scope $O_i\subseteq[n]$ is the set of state coordinates agent $i$'s policy actually depends on. Taking $O_i=[n]$ recovers globally conditioned product policies; smaller $O_i$ models agents that observe only locally. The kernel induced by $\pi$ is
\[
P^\pi(s'\mid s)=\sum_{a\in\Aset}\Big(\prod_{i=1}^n P_i(s'_i\mid s,a)\Big)\Big(\prod_{j=1}^n \pi_j(a_j\mid s_{O_j})\Big).
\]

\paragraph{Average reward and the Poisson equation.}
For the average-reward analysis we restrict to policies $\pi$ under which $P^\pi$ is irreducible on $\Sset$, so that it has a unique stationary distribution $d^\pi$. Given a per-step reward $r:\Sset\times\Aset\to\R$, write $r^\pi(s)=\sum_a r(s,a)\prod_k \pi_k(a_k\mid s_{O_k})$ for the expected one-step reward in state $s$ and $\bar r^\pi=\sum_s d^\pi(s)r^\pi(s)$ for the average reward under $\pi$. The object that plays the role of the value function in this setting is the \emph{bias function} $h^\pi$, which measures the long-run advantage of starting in $s$ relative to starting from stationarity and is defined up to a constant by the Poisson equation
\[
h^\pi - T^\pi h^\pi \ =\ r^\pi - \bar r^\pi, \qquad T^\pi f(s) := \E_{S'\sim P^\pi(\cdot\mid s)}[f(S')].
\]
This $h^\pi$ is used by policy-gradient and policy-improvement updates the same way the discounted value function is. When we say the system is value-local, we mean exactly that the coordinatewise oscillations of $h^\pi$ decay quickly in graph distance. Discounted-setting corollaries are in the appendix.

\paragraph{Coordinatewise oscillations.}
The natural way to quantify how much an agent's reward or value depends on every other agent is the coordinatewise oscillation. For a bounded $f:\Sset\to\R$, the $i$-oscillation
\[
\delta_i(f)\ =\ \sup_{x,y\in\Sset:\,x_{-i}=y_{-i}}|f(x)-f(y)|
\]
is the largest change $f$ can undergo when only coordinate $i$ changes; equivalently, it is the Lipschitz constant of $f$ under a single-coordinate Hamming change. Write $\delta(f)=(\delta_i(f))_{i=1}^n$ for the vector of oscillations and $\|\delta(f)\|_\infty=\max_i\delta_i(f)$ for its maximum. The seminorm $\|\delta(\cdot)\|_\infty$ is the standard object in Dobrushin-type arguments for interacting particle systems and Glauber dynamics~\citep{Dobruschin1968TheDO, Martinelli1999}; basic properties are in Appendix~\ref{app:oscillation_proof}. Spatial truncation works precisely when the entries of $\delta(h^\pi)$ are small at agents far from any source of reward variation.

\paragraph{Three sensitivity matrices.}
We measure one-step coupling with three $n\times n$ nonnegative matrices. The first two depend only on the environment; the third depends on the policy.
\begin{align*}
E^{\mathrm{s}}_{j\leftarrow i} &\ =\ \sup_{\substack{s,s'\in\Sset:\,s_{-i}=s'_{-i}\\a\in\Aset}}\TV\!\big(P_j(\cdot\mid s,a),\,P_j(\cdot\mid s',a)\big), &&\text{(state channel)}\\[2pt]
E^{\mathrm{a}}_{j\leftarrow i} &\ =\ \sup_{\substack{s\in\Sset\\a,a'\in\Aset:\,a_{-i}=a'_{-i}}}\TV\!\big(P_j(\cdot\mid s,a),\,P_j(\cdot\mid s,a')\big), &&\text{(action channel)}\\[2pt]
\Pi_{j\leftarrow i}(\pi) &\ =\ \sup_{s,s':\,s_{-i}=s'_{-i}}\TV\!\big(\pi_j(\cdot\mid s_{O_j}),\,\pi_j(\cdot\mid s'_{O_j})\big). &&\text{(policy reactivity)}
\end{align*}
In words, $E^{\mathrm s}_{j\leftarrow i}$ is the largest jump in $j$'s next-state law that a change of $s_i$ can cause with the action held fixed; $E^{\mathrm a}_{j\leftarrow i}$ is the corresponding quantity for an action change at $i$ with the state held fixed; and $\Pi_{j\leftarrow i}(\pi)$ is the largest jump in $j$'s action distribution that a change of $s_i$ can cause. The last is zero whenever $i\notin O_j$. The policy-induced closed-loop influence of $i$ on the next-state marginal of $j$ is
\[
C^\pi_{j\leftarrow i}=\sup_{s,s':\,s_{-i}=s'_{-i}}\TV\!\big(P^\pi_j(\cdot\mid s),P^\pi_j(\cdot\mid s')\big),
\qquad
P^\pi_j(\cdot\mid s)=\sum_{a}P_j(\cdot\mid s,a)\prod_k \pi_k(a_k\mid s_{O_k}),
\]
and $C^\pi$ is the matrix that prior work bounds by Dobrushin row sums. The next section decomposes it into the environment and policy pieces and replaces the row-sum bound with a spectral one.

\section{Policy-induced influence and locality} \label{sec:influence}

This section gives the decomposition of $C^\pi$ that powers everything else, the spectral condition that controls how the Poisson solution decays, and the resulting locality of the average-reward bias function. Proofs are in the appendix.

A perturbation of $s_i$ reaches the next-state marginal of $j$ along two routes in one step. The first is direct through the environment: even with the action fixed, a change in $s_i$ can move $P_j(\cdot\mid s,a)$. The second is indirect, through the policy and back into the environment, since changing $s_i$ shifts the action distribution of each agent $k$, and a change in $a_k$ can in turn move $j$'s next-state law. The direct route is bounded by $E^{\mathrm s}_{j\leftarrow i}$. The indirect route factors as $\sum_k E^{\mathrm a}_{j\leftarrow k}\Pi_{k\leftarrow i}(\pi)$, picking up the action-channel weight $E^{\mathrm a}_{j\leftarrow k}$ from the policy-action-state hop and the policy weight $\Pi_{k\leftarrow i}(\pi)$ from the state-policy hop.

\begin{proposition}[Decomposition of policy-induced influence]
\label{prop:decomposition}
For any product policy $\pi$ and any factorized synchronous dynamics on a finite state space, $C^\pi \preceq E^{\mathrm{s}} + E^{\mathrm{a}}\Pi(\pi)$ entrywise. Equivalently, $C^\pi_{j\leftarrow i}\le E^{\mathrm{s}}_{j\leftarrow i}+\sum_k E^{\mathrm{a}}_{j\leftarrow k}\Pi_{k\leftarrow i}(\pi)$ for every $i,j$.
\end{proposition}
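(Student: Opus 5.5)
Fix $i,j$ and a pair $s,s'$ with $s_{-i}=s'_{-i}$. We will bound $\TV(P^\pi_j(\cdot\mid s),P^\pi_j(\cdot\mid s'))$ by a triangle inequality through a hybrid distribution that uses the environment at $s'$ but the action law at $s$:
\[
Q(\cdot) := \sum_a P_j(\cdot\mid s',a)\,\pi(a\mid s).
\]
Then $\TV(P^\pi_j(\cdot\mid s),P^\pi_j(\cdot\mid s'))\le \TV(P^\pi_j(\cdot\mid s),Q)+\TV(Q,P^\pi_j(\cdot\mid s'))$. The first term is a mixture over the same action law $\pi(\cdot\mid s)$ of the pairs $P_j(\cdot\mid s,a)$ and $P_j(\cdot\mid s',a)$. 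By joint convexity of TV it is at most $\sup_a\TV(P_j(\cdot\mid s,a),P_j(\cdot\mid s',a))\le E^{\mathrm s}_{j\leftarrow i}$. This is the direct route.

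The second term compares $\sum_a g(a)\pi(a\mid s)$ and $\sum_a g(a)\pi(a\mid s')$ with the kernel $g(a)=P_j(\cdot\mid s',a)$ fixed. We will interpolate between the product action laws one coordinate at a time. Order the agents $1,\dots,n$ and let $\mu^{(k)}$ be the product law whose factors for agents $\le k$ are $\pi_\ell(\cdot\mid s'_{O_\ell})$ and whose factors for agents $>k$ are $\pi_\ell(\cdot\mid s_{O_\ell})$. Then $\mu^{(0)}=\pi(\cdot\mid s)$ and $\mu^{(n)}=\pi(\cdot\mid s')$, and telescoping gives
\[
\TV(Q,P^\pi_j(\cdot\mid s'))\le\sum_{k=1}^n \TV\Big(\textstyle\sum_a P_j(\cdot\mid s',a)\mu^{(k-1)}(a),\ \sum_a P_j(\cdot\mid s',a)\mu^{(k)}(a)\Big).
\]

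The step requiring care is bounding each summand by $E^{\mathrm a}_{j\leftarrow k}\Pi_{k\leftarrow i}(\pi)$. The laws $\mu^{(k-1)}$ and $\mu^{(k)}$ differ only in the $k$-th factor, which is $\alpha=\pi_k(\cdot\mid s_{O_k})$ versus $\beta=\pi_k(\cdot\mid s'_{O_k})$, with the common factor $\nu$ on $a_{-k}$. Condition on $a_{-k}\sim\nu$. By convexity it suffices to bound, for each fixed $a_{-k}$, the TV distance between $\sum_{a_k}\phi(a_k)\alpha(a_k)$ and $\sum_{a_k}\phi(a_k)\beta(a_k)$, where $\phi(a_k)=P_j(\cdot\mid s',(a_k,a_{-k}))$.

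For this we use an optimal coupling $(A,B)$ of $\alpha,\beta$, which has $\Pr(A\ne B)=\TV(\alpha,\beta)$. The difference of the two mixtures equals $\E[(\phi(A)-\phi(B))\mathbf 1\{A\ne B\}]$. Any two values $\phi(a_k),\phi(a_k')$ come from actions differing only in coordinate $k$ with the state fixed, so they are within $E^{\mathrm a}_{j\leftarrow k}$ in TV. Hence the summand is at most $E^{\mathrm a}_{j\leftarrow k}\TV(\alpha,\beta)\le E^{\mathrm a}_{j\leftarrow k}\Pi_{k\leftarrow i}(\pi)$. When $i\notin O_k$ we have $\alpha=\beta$ and the term vanishes, consistent with $\Pi_{k\leftarrow i}=0$.

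Summing over $k$, adding the direct term, and taking the supremum over $(s,s')$ gives the entrywise inequality. The only subtlety is the coupling step: a naive bound $\|\phi\|\cdot\|\alpha-\beta\|_1$ would lose a factor of $2$ and would not produce the product $E^{\mathrm a}\Pi$. Using the oscillation form (TV against a kernel whose values have pairwise TV at most $E^{\mathrm a}$) avoids this loss.
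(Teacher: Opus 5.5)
Your proof is correct and follows essentially the same route as the paper: a triangle inequality through a hybrid distribution, joint convexity of TV for the $E^{\mathrm s}$ term, and maximal couplings of the action marginals for the $E^{\mathrm a}\Pi(\pi)$ term. The differences are cosmetic. Your hybrid pairs the old action law with the new kernel, whereas the paper pairs the new action law with the old kernel. You also swap action factors one agent at a time with single-coordinate couplings; the paper instead builds one independent product coupling of all actions and then applies a coordinatewise triangle inequality to $\TV(P_j(\cdot\mid s,a),P_j(\cdot\mid s,a'))$.
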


The proof inserts an intermediate distribution that uses the new policy weights but the old kernel. It then bounds the two halves separately, once by a TV-convexity step (which gives the $E^{\mathrm s}$ term) and once by a maximal coupling on the actions (which gives the $E^{\mathrm a}\Pi$ term). Full details are in Appendix~\ref{app:proof-prop1}.

\paragraph{A two-agent example.}
Take $n=2$ binary agents, $\Sset_i=\Aset_i=\{0,1\}$. Let $P_1$ be constant (agent~$1$'s next state is independent of everything), and let agent~$2$'s next state copy agent~$1$'s action: $P_2(s'_2=1\mid s,a)=\mathbf 1\{a_1=1\}$. Then $E^{\mathrm s}=0$ and $E^{\mathrm a}_{2\leftarrow 1}=1$ with all other entries of $E^{\mathrm a}$ equal to zero. An action-supremum bound stops here and reports the system as globally coupled, since $E^{\mathrm a}_{2\leftarrow 1}=1$ is as large as it can be. Now suppose agent~$1$'s policy changes by at most $\alpha$ in TV when $s_1$ flips, and that $\pi_1$ does not observe $s_2$ (so $\Pi_{1\leftarrow 2}=0$). Then $\Pi_{1\leftarrow 1}(\pi)=\alpha$, and Proposition~\ref{prop:decomposition} gives
\[
C^\pi_{2\leftarrow 1}\ \le\ E^{\mathrm s}_{2\leftarrow 1}+E^{\mathrm a}_{2\leftarrow 1}\Pi_{1\leftarrow 1}+E^{\mathrm a}_{2\leftarrow 2}\Pi_{2\leftarrow 1}\ =\ \alpha.
\]
The closed-loop coupling is as small as the policy. The same bound handles the failure mode: if $\pi_1$ is sharp in $s_1$ ($\alpha$ near $1$), it returns $\alpha$ and recovers the action-supremum answer. Smoothing the policy only helps when there is policy reactivity to smooth out. Appendix~\ref{app:policy-ex} works through a complementary instance.

We now convert the entrywise decomposition into a one-step contraction.

\begin{lemma}[Oscillation bound via $H^\pi$]\label{lem:one-step}
Let $T^\pi f(s)=\E[f(S_{t+1})\mid S_t=s]$ under the synchronous update with product policy and factorized kernel. Then $\delta_i(T^\pi f)\le\sum_j H^\pi_{j\leftarrow i}\delta_j(f)$ for every $i$ and every bounded $f$, where $H^\pi\coloneqq E^{\mathrm s}+E^{\mathrm a}\Pi(\pi)$. In vector form, $\delta(T^\pi f)\le (H^\pi)^\top\delta(f)$.
\end{lemma}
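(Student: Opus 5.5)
Fix $i$ and two states $x,y$ with $x_{-i}=y_{-i}$. The plan is to bound $|T^\pi f(x)-T^\pi f(y)|$ by coupling the two one-step transition laws $P^\pi(\cdot\mid x)$ and $P^\pi(\cdot\mid y)$ coordinate by coordinate, so that the next states $S'$ and $\tilde S'$ differ in coordinate $j$ with probability at most $H^\pi_{j\leftarrow i}$. Then we apply the standard telescoping bound for oscillations: if $(S',\tilde S')$ is any coupling, then changing coordinates one at a time along a path from $S'$ to $\tilde S'$ gives
\[
|f(S')-f(\tilde S')|\le \sum_j \mathbf 1\{S'_j\neq \tilde S'_j\}\,\delta_j(f).
\]
Taking expectations yields $|T^\pi f(x)-T^\pi f(y)|\le \sum_j \Pr(S'_j\ne\tilde S'_j)\,\delta_j(f)$. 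Taking the supremum over admissible $(x,y)$ gives the claimed inequality $\delta_i(T^\pi f)\le\sum_j H^\pi_{j\leftarrow i}\delta_j(f)$.

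The coupling is built in two stages, mirroring the proof of Proposition~\ref{prop:decomposition}. First, couple the actions. Since the policy is a product, draw $(a_k,\tilde a_k)$ independently across $k$, each from a maximal coupling of $\pi_k(\cdot\mid x_{O_k})$ and $\pi_k(\cdot\mid y_{O_k})$. This gives $\Pr(a_k\ne\tilde a_k)\le \Pi_{k\leftarrow i}(\pi)$. Second, given $(a,\tilde a)$, couple the next states, independently across $j$ (allowed by the factorized kernel), via a maximal coupling of $P_j(\cdot\mid x,a)$ and $P_j(\cdot\mid y,\tilde a)$. By the triangle inequality for TV, we pass through $P_j(\cdot\mid y,a)$ and then change the actions in $a$ to those in $\tilde a$ one coordinate at a time:
\[
\TV\big(P_j(\cdot\mid x,a),P_j(\cdot\mid y,\tilde a)\big)\le E^{\mathrm s}_{j\leftarrow i}+\sum_k E^{\mathrm a}_{j\leftarrow k}\mathbf 1\{a_k\neq\tilde a_k\}.
\]
The action-changing step is a telescoping argument over single-coordinate action changes, each bounded by the definition of $E^{\mathrm a}$. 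Taking expectations over the action coupling gives $\Pr(S'_j\ne\tilde S'_j)\le E^{\mathrm s}_{j\leftarrow i}+\sum_k E^{\mathrm a}_{j\leftarrow k}\Pi_{k\leftarrow i}(\pi)=H^\pi_{j\leftarrow i}$.

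The main obstacle is a subtlety. Prior Dobrushin arguments bound oscillations using only the marginal TV distances $C^\pi_{j\leftarrow i}$. Our construction instead requires a single joint coupling of the next states. Proposition~\ref{prop:decomposition} alone does not suffice, because under the closed-loop kernel the coordinates of $S'$ are correlated through the shared random action and are not independent. The two-stage coupling avoids this problem: the next states are conditionally independent given the actions, and the per-coordinate disagreement bound holds conditionally on $(a,\tilde a)$ before we average. The points to check are that the conditional couplings can be chosen measurably, which is trivial on finite spaces, and that the product of maximal couplings over $j$ is a valid coupling of the product kernels. The vector form $\delta(T^\pi f)\le (H^\pi)^\top\delta(f)$ is then just a rewriting of the componentwise inequality.
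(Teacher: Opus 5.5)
Your proposal is correct and follows essentially the same route as the paper: independent maximal couplings of the actions across agents, conditionally independent maximal couplings of the next-state coordinates given the actions, a triangle inequality that splits each conditional TV into a state-change piece and an action-change piece, and the coordinate-by-coordinate telescoping bound $|f(X)-f(Y)|\le\sum_j\delta_j(f)\mathbf 1\{X_j\ne Y_j\}$. The only cosmetic difference is that you pass through the intermediate kernel $P_j(\cdot\mid y,a)$ while the paper passes through $P_j(\cdot\mid s,a')$; this does not matter, because $E^{\mathrm s}$ is a supremum over all actions and $E^{\mathrm a}$ is a supremum over all states.
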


Iterating Lemma~\ref{lem:one-step} turns a spectral bound on $H^\pi$ into global decay of $T^\pi$, which in turn bounds the Poisson solution.

\begin{theorem}[Policy-uniform contraction and Poisson decay]
\label{thm:poisson}
Let $\Pi_{\mathrm{pol}}$ be a compact class of product policies, set $H^\pi\coloneqq E^{\mathrm s}+E^{\mathrm a}\Pi(\pi)$ and $\lambda_\star\coloneqq\sup_{\pi\in\Pi_{\mathrm{pol}}}\rho(H^\pi)$, and assume $\lambda_\star<1$. For every $\pi\in\Pi_{\mathrm{pol}}$, every bounded $f$, and every $t\ge 0$,
\[
\delta\big((T^\pi)^t f\big)\le \big((H^\pi)^\top\big)^t\delta(f),
\]
and for every $\bar\lambda\in(\lambda_\star,1)$ there exists $C_{\bar\lambda}\ge 1$ independent of $t,\pi$ such that $\|\delta((T^\pi)^t f)\|_\infty\le C_{\bar\lambda}\bar\lambda^t\|\delta(f)\|_\infty$. If $P^\pi$ is irreducible for every $\pi\in\Pi_{\mathrm{pol}}$, with stationary distribution $d^\pi$, then the Poisson equation $h^\pi-T^\pi h^\pi=r^\pi-\bar r^\pi$ has a solution unique up to an additive constant, and every solution satisfies
\[
\delta(h^\pi)\le \sum_{t=0}^\infty \big((H^\pi)^\top\big)^t\delta(r^\pi) \le \big(I-(H^\pi)^\top\big)^{-1}\delta(r^\pi).
\]
\end{theorem}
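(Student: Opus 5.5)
We argue in three stages: iterate Lemma~\ref{lem:one-step}, turn the spectral bound into a uniform geometric rate, and then sum the resulting series to bound the Poisson solution.

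\textbf{Iterated bound.} We induct on $t$. The case $t=0$ is trivial. For the step, apply Lemma~\ref{lem:one-step} to $g=(T^\pi)^t f$, which is bounded:
\[
\delta\big((T^\pi)^{t+1}f\big)\le (H^\pi)^\top\delta\big((T^\pi)^t f\big)\le (H^\pi)^\top\big((H^\pi)^\top\big)^t\delta(f).
\]
The second inequality uses the induction hypothesis. It is valid because $(H^\pi)^\top$ has nonnegative entries, so it preserves the entrywise order on nonnegative vectors.

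\textbf{Uniform geometric rate.} Taking $\|\cdot\|_\infty$ gives $\|\delta((T^\pi)^tf)\|_\infty\le \|((H^\pi)^\top)^t\|_\infty\|\delta(f)\|_\infty$, so it suffices to show $\sup_{\pi}\|(H^\pi)^t\|_1\le C_{\bar\lambda}\bar\lambda^t$. This uniformity over $\pi$ is the main obstacle.

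My first approach uses continuity and compactness. The entries of $\Pi(\pi)$ are suprema of finitely many TV distances, so $\pi\mapsto H^\pi$ is continuous and $\mathcal H=\{H^\pi:\pi\in\Pi_{\mathrm{pol}}\}$ is compact. Fix $\bar\lambda\in(\lambda_\star,1)$ and $\pi_0$. Gelfand's formula gives $m$ with $\|(H^{\pi_0})^m\|_1<\bar\lambda^m$. By continuity of $M\mapsto\|M^m\|_1$, the same strict inequality holds on a neighborhood of $H^{\pi_0}$. Extract a finite subcover with exponents $m_1,\dots,m_K$, and write $t=qm+r$ with $m$ the exponent of the neighborhood containing $H^\pi$. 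Submultiplicativity then gives
\[
\|(H^\pi)^t\|_1\le \bar\lambda^{qm}\big(\max_{M\in\mathcal H}\|M\|_1\big)^{r}\le C_{\bar\lambda}\bar\lambda^t,
\]
where $C_{\bar\lambda}=\max\big(1,\max_{M\in\mathcal H}\|M\|_1/\bar\lambda\big)^{\max_k m_k}$ is independent of $t$ and $\pi$.

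A cleaner fallback is available. The entrywise maximum $\bar H$ over $\mathcal H$ dominates every $H^\pi$, so $\|(H^\pi)^t\|\le\|\bar H^t\|$. However, $\rho(\bar H)$ may exceed $\lambda_\star$, so this only works if I interpret $\lambda_\star$ accordingly. I will therefore use the compactness argument.

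\textbf{Poisson decay.} Irreducibility of the finite chain gives existence of a solution, via the fundamental matrix, and uniqueness up to constants, since $\ker(I-P^\pi)$ is the constants. For the bound, set $g=r^\pi-\bar r^\pi$, so $\delta(g)=\delta(r^\pi)$. Iterating the Poisson equation gives
\[
h^\pi=\sum_{t=0}^{N-1}(T^\pi)^t g+(T^\pi)^N h^\pi.
\]
Since $\delta$ is subadditive, we get
\[
\delta(h^\pi)\le\sum_{t<N}((H^\pi)^\top)^t\delta(r^\pi)+\delta\big((T^\pi)^Nh^\pi\big).
\]
The last term is at most $C_{\bar\lambda}\bar\lambda^N\|\delta(h^\pi)\|_\infty\mathbf 1$, which tends to $0$ as $N\to\infty$ because $h^\pi$ is bounded on a finite space. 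This avoids any question of whether the Ces\`aro or periodic series for $h$ converges.

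Letting $N\to\infty$ gives the first inequality. Since $\rho(H^\pi)<1$, the Neumann series converges and equals $(I-(H^\pi)^\top)^{-1}\delta(r^\pi)$, which gives the second inequality with equality. Finally, every solution differs from this one by a constant, which has zero oscillation, so the bound holds for every solution.
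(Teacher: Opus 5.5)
Your proof is correct, but it reaches the uniform rate and the Poisson bound by a different route from the paper's.

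\textbf{Uniform constant.} The paper does not use Gelfand's formula or a finite subcover. For each $\pi$ it builds the positive vector $w^\pi=(\bar\lambda I-(H^\pi)^\top)^{-1}\mathbf 1$, which satisfies $(H^\pi)^\top w^\pi\le\bar\lambda w^\pi$. So $(H^\pi)^\top$ is a one-step $\bar\lambda$-contraction on nonnegative vectors in the weighted norm $\max_i |x_i|/w^\pi_i$. Continuity of $\pi\mapsto w^\pi$ plus compactness then bound $\min_i w^\pi_i$ and $\max_i w^\pi_i$ uniformly, giving $C_{\bar\lambda}=M_{\bar\lambda}/m_{\bar\lambda}$. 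That route gives an explicit, resolvent-based constant and an actual Lyapunov weight, which the weighted extension in the appendix builds on. Yours is more elementary, using only submultiplicativity and continuity of $M\mapsto\|M^m\|_1$. It is, however, non-constructive, since the exponents $m_k$ come from the subcover.

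\textbf{Poisson part.} The paper works on the quotient $B(\Sset)/\mathrm{span}\{\mathbf 1\}$ with norm $\|\delta(\cdot)\|_\infty$ and defines $[h^\pi]$ as the Neumann series of the induced operator $\widetilde T^\pi$. It recovers the exact equation by pairing with $d^\pi$ to kill the constant, and gets uniqueness from invertibility of $I-\widetilde T^\pi$. So in the paper, existence and uniqueness modulo constants come from the contraction itself, not from irreducible-chain facts as in your fundamental-matrix argument. Your telescoping identity $h^\pi=\sum_{t<N}(T^\pi)^t g+(T^\pi)^N h^\pi$, with the remainder killed by the geometric rate, is the more direct way to bound an arbitrary solution. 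It avoids passing $\delta$ through the limit of a series $\sum_t (T^\pi)^t g$ that need not converge pointwise (e.g.\ for periodic chains), which the paper's last step does implicitly via continuity of $\delta$ in the quotient norm.
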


Theorem~\ref{thm:poisson} does not assume any agent graph upfront. A graph does emerge, though, through the directed support graph $G_H^\pi$ of $H^\pi$, in which $i\to j$ is an edge whenever $H^\pi_{j\leftarrow i}>0$ (agent $i$'s state directly affects agent $j$'s next state in one step under $\pi$). The matrix power $((H^\pi)^\top)^t$ then has a clean path interpretation: $((H^\pi)^\top)^t_{ij}$ is a path-weighted sum, where each contributing path runs from $i$ to $j$ in exactly $t$ edges of $G_H^\pi$ and has weight equal to the product of $H^\pi$ entries along it. Three consequences follow.
\begin{itemize}[leftmargin=*,topsep=2pt,itemsep=1pt]
\item \emph{Path accumulation.} $\delta_i(h^\pi)$ is bounded by a sum over all directed paths in $G_H^\pi$ starting at $i$ and ending at any agent $j$ where the reward varies, weighted by $H^\pi$ along the path.
\item \emph{Exponential attenuation.} The contribution of paths of length $t$ is $O(\bar\lambda^t)$, so agents far downstream of $i$ in $G_H^\pi$ matter exponentially less.
\item \emph{Truncation error.} Ignoring agents more than $\kappa$ hops downstream of $i$ drops the tail of the Neumann series, which is bounded by $\tfrac{C}{1-\lambda}\bar\lambda^{\kappa+1}\|\delta(r^\pi)\|_\infty$.
\end{itemize}
If $E^{\mathrm s},E^{\mathrm a},\Pi(\pi)$ are each local with respect to some underlying graph $G$, then $G_H^\pi$ is a finite-radius closure of $G$. It need not equal $G$, because $E^{\mathrm a}\Pi(\pi)$ can create new closed-loop edges. An action of agent $k$ that affects $j$, combined with a policy at $k$ that reacts to $s_i$, produces a closed-loop edge $i\to j$ even when $i$ and $j$ are not graph-neighbors in $G$.

Theorem~\ref{thm:poisson} recovers and strengthens the locality results of \citep{qu_scalabale_marl_2020, lin_scalable_marl_stochastic_2021}: the spectral-radius condition is strictly weaker than the row-sum one on any single comparison matrix, and the policy-dependent matrix $H^\pi$ can be much smaller than the policy-independent action-supremum bounds those works use, so $\rho(H^\pi)<1$ can hold in regimes where their row-sum conditions fail. The two-agent example above is one such case; Appendix~\ref{app:sparse} works out the formal embedding. The reward enters Theorem~\ref{thm:poisson} only through $\delta(r^\pi)$, so a reward that decomposes locally produces a sparse $\delta(r^\pi)$ and tightens the bound automatically; no separate local-reward assumption is needed.

\paragraph{A small numerical instance of the gap.}
The gap between $\rho(H^\pi)$ and $\|H^\pi\|_\infty$ can be large even on small matrices, and it is easy to build closed-loop matrices that our condition certifies but the Dobrushin row-sum bound does not. For $n=2$ agents, take
\[
H^\pi\ =\ \begin{pmatrix} 0.6 & 0 \\ 0.6 & 0.6\end{pmatrix}.
\]
With our convention $H^\pi_{j\leftarrow i}=H^\pi[j,i]$, this models a system in which agent~$1$ influences agent~$2$'s next state ($H^\pi_{2\leftarrow 1}=0.6$) but agent~$2$ does not influence agent~$1$ ($H^\pi_{1\leftarrow 2}=0$). The row sums are $0.6$ and $1.2$, so $\|H^\pi\|_\infty=1.2$ and the row-sum condition is violated. The eigenvalues are both $0.6$, so $\rho(H^\pi)=0.6$ and Theorem~\ref{thm:poisson} certifies locality with decay rate arbitrarily close to $0.6$. A $\kappa$-hop truncation costs error $O(0.6^{\kappa+1})$, so $\kappa=10$ already drives the truncation error below $10^{-2}$ while the row-sum diagnostic offers no guarantee at any $\kappa$. The asymmetry is what creates the gap: closed-loop influence flows one way ($1\to 2$) and decays in one hop in the reverse direction, even though one row sum is above one.

\begin{figure}[h]
\centering
\begin{tikzpicture}[>={Stealth[length=2.5mm]},every node/.style={font=\small}]
\node[circle,draw,fill=blue!10,minimum size=9mm,inner sep=0] (si) at (0,1.8) {$s_i$};
\node[circle,draw,fill=blue!10,minimum size=9mm,inner sep=0] (sjp) at (7,1.8) {$s'_j$};
\draw[->, line width=0.7pt] (si) -- node[above,font=\small]{direct route: $E^{\mathrm s}_{j\leftarrow i}$} (sjp);
\node[circle,draw,fill=orange!20,minimum size=9mm,inner sep=0] (ak) at (3.5,0) {$a_k$};
\draw[->, line width=0.7pt] (si) to[bend right=20] node[below left=-2pt,font=\small]{$\Pi_{k\leftarrow i}(\pi)$} (ak);
\draw[->, line width=0.7pt] (ak) to[bend right=20] node[below right=-2pt,font=\small]{$E^{\mathrm a}_{j\leftarrow k}$} (sjp);
\node[font=\small] at (3.5,-0.8) {indirect route, summed over $k$: $\sum_k E^{\mathrm a}_{j\leftarrow k}\,\Pi_{k\leftarrow i}(\pi)$};
\end{tikzpicture}
\caption{The decomposition $C^\pi_{j\leftarrow i}\le E^{\mathrm s}_{j\leftarrow i}+\sum_k E^{\mathrm a}_{j\leftarrow k}\Pi_{k\leftarrow i}(\pi)$, written as two routes from a perturbation of $s_i$ to the next-state marginal of $j$. The direct route through the environment has weight $E^{\mathrm s}_{j\leftarrow i}$. The indirect route passes through each intermediate action $a_k$, picking up a factor $\Pi_{k\leftarrow i}(\pi)$ for the state-to-action hop and $E^{\mathrm a}_{j\leftarrow k}$ for the action-to-next-state hop, then sums over $k$.}
\label{fig:support}
\end{figure}

\paragraph{Discounted setting.}
For the standard $\gamma$-discounted Bellman operator $T^\pi_\gamma$, the one-step bound becomes $\delta(T^\pi_\gamma f)\le \gamma(H^\pi)^\top\delta(f)$, so locality is certified whenever $\gamma\rho(H^\pi)<1$. The decay rate is set jointly by the temporal factor $\gamma$ and the structural factor $\rho(H^\pi)$. With $\gamma=0.99$ and $\rho(H^\pi)=0.5$, for instance, taking $\tilde\lambda=0.5$ as a certificate gives a neighborhood of order $7$ to reach error $10^{-2}$, against roughly $458$ from a $\gamma$-only bound. The same analysis carries over to the asynchronous Glauber-style update; see Appendix~\ref{app:async}.

\section{Controlling policy sensitivity via softmax temperature}
\label{sec:softmax}

The decay rate in Theorem~\ref{thm:poisson} is $\rho(H^\pi)=\rho(E^{\mathrm s}+E^{\mathrm a}\Pi(\pi))$. The environment terms $E^{\mathrm s},E^{\mathrm a}$ are fixed once the MDP is given. The policy term $\Pi(\pi)$ is set by the policy class, so it is what a learning algorithm can change. A natural way to keep $\Pi(\pi)$ small is to restrict the policy class to state-to-action maps that are smooth. The temperature-$\tau$ softmax class~\citep{Geist2019ATO, HaarnojaZAL18} is the standard such class, and it is the policy class used in entropy-regularized control, soft actor-critic, and KL-proximal updates, where $\tau$ is a hyperparameter of the algorithm. We show in this section that the same $\tau$ that controls the reward-versus-entropy tradeoff in those methods also bounds $\Pi(\pi)$ entrywise, and so bounds the locality certificate of Theorem~\ref{thm:poisson}. All proofs are in Appendix~\ref{app:softmax-Pi}.

\paragraph{Setup.}
The entropy-regularized average-reward objective is
\[
J_\tau(\pi)\ =\ \E_{s\sim d^\pi,\ a\sim \pi(\cdot\mid s)}\!\Big[r(s,a)-\tau\sum_{k=1}^n \log\pi_k(a_k\mid s_{O_k})\Big],
\]
the average-reward analogue of the discounted soft-Bellman objective. We study the temperature-$\tau$ softmax policy class commonly used in entropy-regularized and KL-proximal methods.

\begin{definition}[Softmax policy]
A policy $\pi$ is a temperature-$\tau$ softmax policy if for each agent $k$ there is a logit function $g_k:\Sset_{O_k}\times\Aset_k\to\R$ such that $\pi_k(a_k\mid s_{O_k})\propto \exp(g_k(s_{O_k},a_k)/\tau)$.
\end{definition}

How reactive a softmax policy is to its state input is controlled by $\tau$ together with how reactive the logit itself is.

\begin{definition}[Logit Lipschitz constant]
For a logit function $g_k$, the one-coordinate logit constant with respect to state $i$ is $L_{k\leftarrow i}\ :=\ \sup_{s_{-i}=s'_{-i}}\|g_k(s_{O_k},\cdot)-g_k(s'_{O_k},\cdot)\|_\infty$.
\end{definition}

$L_{k\leftarrow i}$ is a property of the logit parameterization alone and does not involve $\tau$. For a linear logit $g_k(s,a)=\langle w_{k,a},\phi(s)\rangle$ with features $\phi$ of single-coordinate oscillation $\delta_i(\phi)$, one can take $L_{k\leftarrow i}\le \sup_a 2\|w_{k,a}\|\delta_i(\phi)$, so $L$ is set entirely by the feature design and the weight magnitudes and can be precomputed.

\begin{lemma}[Softmax temperature controls $\Pi(\pi)$]\label{lem:Pi-softmax}
For a temperature-$\tau$ softmax policy, $\Pi_{k\leftarrow i}(\pi)\ \le\ \min\{1,\,L_{k\leftarrow i}/(2\tau)\}$ for all $k,i\in[n]$.
\end{lemma}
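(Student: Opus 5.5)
The bound $\Pi_{k\leftarrow i}(\pi)\le 1$ is trivial, since total variation never exceeds one. It therefore suffices to show that for any two states $s,s'$ with $s_{-i}=s'_{-i}$ we have $\TV(\pi_k(\cdot\mid s_{O_k}),\pi_k(\cdot\mid s'_{O_k}))\le L_{k\leftarrow i}/(2\tau)$, and then take the supremum. If $i\notin O_k$ both sides vanish. Otherwise, fix such a pair and write $u=g_k(s_{O_k},\cdot)/\tau$ and $v=g_k(s'_{O_k},\cdot)/\tau$. These are two logit vectors on the finite set $\Aset_k$, and by definition of the logit constant $\|u-v\|_\infty\le L_{k\leftarrow i}/\tau$. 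The claim then reduces to a purely finite-dimensional statement about the softmax map $\sigma(u)_a=e^{u_a}/\sum_b e^{u_b}$:
\[
\TV(\sigma(u),\sigma(v))\le \tfrac12\|u-v\|_\infty .
\]

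To prove this, I interpolate along $w(t)=v+t(u-v)$ for $t\in[0,1]$, with $p(t)=\sigma(w(t))$ and $\Delta=u-v$. Differentiating gives
\[
\dot p_a(t)=p_a(t)\big(\Delta_a-\E_{p(t)}[\Delta]\big),
\]
and hence $\tfrac12\sum_a|\dot p_a(t)|=\tfrac12\E_{p(t)}\big|\Delta-\E_{p(t)}\Delta\big|$. The next step is to bound this mean absolute deviation by $\tfrac12(\max\Delta-\min\Delta)$. Write $m=\E_p\Delta$. The positive and negative deviations have equal mass, say $q$, so
\[
\E_p|\Delta-m|=2\E_p(\Delta-m)_+ .
\]
Let $A=\{a:\Delta_a>m\}$ and $x=p(A)$. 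Then $\E_p(\Delta-m)_+\le x(\max\Delta-m)$, and also $\E_p(\Delta-m)_+=\E_p(m-\Delta)_+\le(1-x)(m-\min\Delta)$. The minimum of these two upper bounds is at most $\tfrac14(\max\Delta-\min\Delta)$: if $x\le\tfrac12$ and $m$ sits at the midpoint, the first bound gives $\tfrac12\cdot\tfrac12\,\mathrm{range}$, and in general one uses $\min(\alpha,\beta)\le$ any convex combination, choosing weights $(1-x)$ and $x$ to get $x(1-x)\,\mathrm{range}\le\mathrm{range}/4$. This yields $\tfrac12\E_p|\Delta-m|\le\tfrac14\,\mathrm{range}(\Delta)\le\tfrac12\|\Delta\|_\infty$. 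Integrating over $t\in[0,1]$ and using the triangle inequality for the $\ell_1$ norm then gives $\TV(\sigma(u),\sigma(v))\le\tfrac12\|\Delta\|_\infty$. In fact the argument gives the sharper $\tfrac14\,\mathrm{range}(\Delta)$, which is consistent with the claim.

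The main obstacle is exactly this deviation bound: getting the constant $1/2$ rather than the crude $1$ that follows from $|\Delta-m|\le 2\|\Delta\|_\infty$. If the convex-combination step proves fiddly, I would fall back on the simpler observation that the mean absolute deviation of a variable valued in $[\min\Delta,\max\Delta]$ is at most half the range, which follows from Popoviciu-type reasoning together with $\E|X-m|\le\sqrt{\mathrm{Var}X}\le\tfrac12\mathrm{range}$. Since $\mathrm{range}(\Delta)\le 2\|\Delta\|_\infty$, this still delivers $\tfrac12\cdot\tfrac12\cdot 2\|\Delta\|_\infty=\tfrac12\|\Delta\|_\infty$ for the TV derivative. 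Substituting $\|\Delta\|_\infty\le L_{k\leftarrow i}/\tau$ and taking the supremum over admissible $(s,s')$ completes the proof.
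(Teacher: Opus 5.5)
Your proposal is correct and follows the paper's proof essentially step for step: interpolate along $w(t)=v+t(u-v)$, differentiate via the softmax Jacobian $\mathrm{diag}(p)-pp^\top$, and reduce to bounding the mean absolute deviation $\E_p|\Delta-\E_p\Delta|$. The one difference is in bounding that deviation. The paper uses Jensen and $\mathrm{Var}(X)\le 1-\mu^2$ for $X\in[-1,1]$, which is essentially your fallback. Your positive/negative-part balancing with $x(1-x)\le\tfrac14$ instead yields the slightly sharper, shift-invariant intermediate bound $\tfrac14\,\mathrm{range}(\Delta)$ (i.e.\ $\tfrac{1}{4\tau}$ times the range of the logit difference) before you relax it to $\tfrac12\|\Delta\|_\infty$.
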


The proof bounds the TV distance between two softmax distributions by a sigmoidal function of the difference of their logits, then linearizes at zero with Lipschitz constant $1/(2\tau)$. The cap at $1$ is the trivial TV bound. Writing $L=[L_{k\leftarrow i}]$ for the matrix of logit constants and applying the lemma entrywise gives $\Pi(\pi)\preceq \min\{\mathbf 1,L/(2\tau)\}$. The spectral radius $\rho(M)$ of a nonnegative matrix $M$ is entrywise monotone (Perron--Frobenius), so an entrywise upper bound on $H^\pi$ gives an upper bound on $\rho(H^\pi)$:
\[
H^\pi \ \preceq\ E^{\mathrm s}+E^{\mathrm a}\,\frac{L}{2\tau}
\quad\Longrightarrow\quad
\rho(H^\pi)\ \le\ \rho\!\left(E^{\mathrm s}+E^{\mathrm a}\,\frac{L}{2\tau}\right).
\]
The right-hand side is the quantity a practitioner actually computes. It is built from the (known) environment matrices $E^{\mathrm s},E^{\mathrm a}$ and the (designed) logit constants $L$, and its spectral radius is monotone nonincreasing in $\tau$.

\paragraph{Locality--optimality tradeoff.}
The behavior at the two ends of the temperature range is easy to describe. As $\tau\to\infty$, the softmax policy approaches uniform on every state, $\Pi(\pi)\to 0$, and $H^\pi\to E^{\mathrm s}$. Whenever $\rho(E^{\mathrm s})<1$, a high enough $\tau$ certifies locality regardless of how strong the action channel is. The cost is that the policy is far from greedy and the objective $J_\tau$ deviates from the unregularized average reward by $O(\tau\log|\Aset|)$. As $\tau\to 0$ the policy becomes deterministic, $\Pi(\pi)$ saturates near $1$ wherever the logit ordering is sensitive to the state, and $\rho(H^\pi)$ can cross one and break the certificate.

In an algorithmic loop this gives a clean recipe. Pick a minimum temperature $\tau_{\min}$ at which $\rho(E^{\mathrm s}+E^{\mathrm a}L/(2\tau_{\min}))<1$ holds with margin, run the entropy-regularized improvement at $\tau=\tau_{\min}$ with a truncation radius $\kappa$ chosen to push the bias below the desired accuracy, and either stop there (accepting the regularization bias) or anneal $\tau$ downward in stages, increasing $\kappa$ at each stage to compensate. Appendix~\ref{app:nonlocal-ex} works through a quantitative instance.

\section{An oracle framework for localized evaluation and block-coordinate improvement}
\label{sec:algo}

We now use the decay result of Section~\ref{sec:influence} to give a deterministic oracle guarantee for a block-coordinate KL-proximal improvement template. The section deliberately separates the structural effect of locality from the orthogonal issues of statistical estimation and function approximation, since the latter can be addressed by standard tools once the structural picture is clear. The only object in our framework that is guaranteed to be locally computable is the \emph{certificate} below; the truncated Poisson surrogate is an oracle quantity used to bound bias.

Recall the support graph $G_H^\pi$ of $H^\pi=E^{\mathrm s}+E^{\mathrm a}\Pi(\pi)$, with edge $i\to j$ whenever $H^\pi_{j\leftarrow i}>0$. When $H^\pi$ is sparse, the $\kappa$-hop neighborhoods in $G_H^\pi$ are small.

\paragraph{Phase 1: locality certificate.}
Set $b^\pi=\delta(r^\pi)$. Theorem~\ref{thm:poisson} gives $\delta(h^\pi)\le \sum_{t\ge 0}((H^\pi)^\top)^t b^\pi$. Truncating this Neumann series at depth $\kappa$ yields the computable certificate
\[
\widehat\delta^{(\kappa)} \coloneqq \sum_{t=0}^{\kappa}\big((H^\pi)^\top\big)^t b^\pi.
\]
The $i$-th component $\widehat\delta^{(\kappa)}_i$ is a sum over directed paths in $G_H^\pi$ of length at most $\kappa$ starting at $i$ (weighted by $H^\pi$ entries along the path) and accumulating $b^\pi_j$ at each endpoint $j$. It depends only on the $\kappa$-hop \emph{out-ball} of $i$ in $G_H^\pi$ (the agents reachable from $i$ in at most $\kappa$ directed edges) and is computable in $\kappa$ rounds of message passing along the reverse edges of that ball. For the bias analysis we also use the truncated Poisson surrogate $\widehat h_\kappa^\pi \coloneqq \sum_{t=0}^{\kappa}(T^\pi)^t(r^\pi-\bar r^\pi)$.

\begin{theorem}[Localized certificate and truncation bias]
\label{thm:local-eval}
Let $\pi$ be a product policy with $P^\pi$ irreducible, and suppose $\|((H^\pi)^\top)^t\|_{\infty\to\infty}\le C\lambda^t$ for some $C\ge 1$, $\lambda\in(0,1)$. Let $R^{(\kappa)}=\sum_{t>\kappa}((H^\pi)^\top)^t b^\pi$. Every solution $h^\pi$ of the Poisson equation satisfies $\delta(h^\pi)\le \widehat\delta^{(\kappa)}+R^{(\kappa)}$ with $\|R^{(\kappa)}\|_\infty\le \tfrac{C}{1-\lambda}\lambda^{\kappa+1}\|b^\pi\|_\infty$, and the oracle bias of $\widehat h_\kappa^\pi$ satisfies
\[
B_\kappa^\pi
\coloneqq
\inf_{c\in\R}\|\widehat h_\kappa^\pi-h^\pi-c\mathbf 1\|_\infty
\ \le\
\tfrac12\,\|R^{(\kappa)}\|_1
\ \le\
\tfrac{nC}{2(1-\lambda)}\lambda^{\kappa+1}\|b^\pi\|_\infty.
\]
\end{theorem}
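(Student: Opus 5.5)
The plan is to obtain both claims from Theorem~\ref{thm:poisson}, Lemma~\ref{lem:one-step}, and one telescoping identity for the Poisson equation. Since $H^\pi$ is entrywise nonnegative, multiplying by $(H^\pi)^\top$ preserves entrywise inequalities between nonnegative vectors. I will use this monotonicity throughout.

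\emph{Step 1: certificate plus remainder.} Theorem~\ref{thm:poisson} applies to every solution $h^\pi$ and gives $\delta(h^\pi)\le\sum_{t\ge0}((H^\pi)^\top)^t b^\pi$. Split this series at $t=\kappa$. The terms with $t\le\kappa$ form $\widehat\delta^{(\kappa)}$ and the terms with $t>\kappa$ form $R^{(\kappa)}$. For the tail, use the assumed operator-norm bound termwise:
\[
\|R^{(\kappa)}\|_\infty\le\sum_{t>\kappa}C\lambda^t\|b^\pi\|_\infty=\tfrac{C}{1-\lambda}\lambda^{\kappa+1}\|b^\pi\|_\infty .
\]
The same bound shows that the series converges absolutely.

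\emph{Step 2: exact error identity.} I will not use a representation $h^\pi=\sum_t (T^\pi)^t(r^\pi-\bar r^\pi)$, because that would require aperiodicity. Instead, apply $\sum_{t=0}^{\kappa}(T^\pi)^t$ to both sides of the Poisson equation $h^\pi-T^\pi h^\pi=r^\pi-\bar r^\pi$. The left side telescopes:
\[
\widehat h_\kappa^\pi=h^\pi-(T^\pi)^{\kappa+1}h^\pi .
\]
Hence $B_\kappa^\pi=\inf_c\|(T^\pi)^{\kappa+1}h^\pi+c\mathbf 1\|_\infty$, which is half the oscillation $\max f-\min f$ of $f:=(T^\pi)^{\kappa+1}h^\pi$. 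Any two joint states $x,y$ are connected by a path that changes one coordinate at a time, with each coordinate changed at most once. Therefore $\max f-\min f\le\sum_i\delta_i(f)=\|\delta(f)\|_1$, and so $B_\kappa^\pi\le\tfrac12\|\delta(f)\|_1$.

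\emph{Step 3: bounding $\delta(f)$ by the tail.} Iterating Lemma~\ref{lem:one-step}, which is the first display of Theorem~\ref{thm:poisson}, gives $\delta(f)\le((H^\pi)^\top)^{\kappa+1}\delta(h^\pi)$. Insert the Neumann bound on $\delta(h^\pi)$ and use monotonicity:
\[
\delta(f)\le\sum_{t\ge0}((H^\pi)^\top)^{t+\kappa+1}b^\pi=R^{(\kappa)} .
\]
Exchanging the matrix power with the infinite sum is legitimate because the series converges absolutely by Step 1. This yields $B_\kappa^\pi\le\tfrac12\|R^{(\kappa)}\|_1$. Combining $\|R^{(\kappa)}\|_1\le n\|R^{(\kappa)}\|_\infty$ with Step 1 gives the final bound $\tfrac{nC}{2(1-\lambda)}\lambda^{\kappa+1}\|b^\pi\|_\infty$.

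\emph{Main obstacle.} The step needing the most care is Step 2. The naive route writes $h^\pi$ as a convergent series of $(T^\pi)^t$ applied to $r^\pi-\bar r^\pi$, and that needs mixing; the telescoping identity avoids it by using only the Poisson equation. Once the identity is in place, the key observation is that the bias equals the oscillation of $(T^\pi)^{\kappa+1}h^\pi$, and that this oscillation is exactly what Lemma~\ref{lem:one-step} controls. I would also check that irreducibility enters only through Theorem~\ref{thm:poisson}, namely to guarantee that a Poisson solution exists and satisfies the Neumann bound.
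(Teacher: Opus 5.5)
Your proof is correct, and Step~1 is identical to the paper's. The two proofs differ in how they control the oracle bias. The paper reuses the quotient-space construction from the proof of Theorem~\ref{thm:poisson}, writing $[h^\pi]=\sum_{t\ge0}(\widetilde T^\pi)^t[g^\pi]$. Then $[\widehat h_\kappa^\pi-h^\pi]=-\sum_{t>\kappa}(\widetilde T^\pi)^t[g^\pi]$ is literally the tail of that series, and subadditivity of $\delta$ bounds each tail term by $((H^\pi)^\top)^t b^\pi$. You instead derive the exact finite identity $\widehat h_\kappa^\pi-h^\pi=-(T^\pi)^{\kappa+1}h^\pi$ from the Poisson equation alone. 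You then apply the $(\kappa+1)$-step oscillation bound to $h^\pi$ and finish with the Neumann bound on $\delta(h^\pi)$.

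What your route buys is that it needs no infinite operator series for the bias step. It also yields the intermediate estimate $\delta(\widehat h_\kappa^\pi-h^\pi)\le((H^\pi)^\top)^{\kappa+1}\delta(h^\pi)$. This is entrywise at most $R^{(\kappa)}$ and can be combined with any sharper bound on $\delta(h^\pi)$. The paper's route is shorter given machinery it has already built, and it makes the ``truncated Neumann series'' reading of $\widehat h_\kappa^\pi$ explicit.

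Your stated motivation is slightly off. The series representation needs no separate aperiodicity assumption here. The paper uses it only modulo constants, where convergence follows from geometric decay of $\|\delta((T^\pi)^t g^\pi)\|_\infty$. In fact, that decay rules out periodicity of an irreducible chain: for a periodic chain, the indicator of a cyclic class would keep oscillation $1$ under every power.

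One more step to state explicitly: to invoke Theorem~\ref{thm:poisson}, note that the assumed power bound gives $\rho(H^\pi)\le\lambda<1$, so the theorem applies to the compact singleton class $\{\pi\}$. The paper leaves this implicit as well.
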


The two bounds play different roles. $\|R^{(\kappa)}\|_\infty$ is the residual error of the certificate the agent actually computes. $B_\kappa^\pi$ is the oracle-side bias of the truncated Poisson surrogate, which appears only in the proof of the improvement step below. Both decay at the same rate $\lambda^{\kappa+1}$.

\paragraph{Phase 2: block KL-prox improvement.}
We use the surrogate $\widehat h_\kappa^\pi$ to drive a one-agent policy update. The KL-proximal (or mirror-descent) update is a standard step in entropy-regularized policy gradient, NPG, and TRPO-style methods: it picks a new policy that maximizes the predicted improvement (a linearized advantage) minus $\eta$ times the KL divergence from the old policy, where the temperature $\eta>0$ controls how aggressive the step is. We apply this update one agent at a time. This per-step $\eta$ is separate from the softmax temperature $\tau$ of Section~\ref{sec:softmax}, which is a property of the policy class. Fix a baseline product policy $\pi$ and an agent $k$. The exact advantage and its truncated counterpart are
\[
A^\pi(s,a)=r(s,a)-\bar r^\pi+\sum_y P(y\mid s,a)h^\pi(y)-h^\pi(s),
\qquad
\widehat A_\kappa^\pi=A^\pi\big|_{h^\pi\to \widehat h_\kappa^\pi}.
\]
The exact and approximate block logits for agent $k$ are
\[
g_{k,\star}^\pi(s,a_k)=\E_{a_{-k}\sim \pi_{-k}(\cdot\mid s)}\!\big[A^\pi(s,(a_k,a_{-k}))\big],
\qquad
\widehat g_{k,\kappa}^\pi(s,a_k)=\E_{a_{-k}\sim \pi_{-k}(\cdot\mid s)}\!\big[\widehat A_\kappa^\pi(s,(a_k,a_{-k}))\big],
\]
and the KL-prox update is $\mu_k(\cdot\mid s)\propto \pi_k(\cdot\mid s)\exp(\widehat g_{k,\kappa}^\pi(s,\cdot)/\eta)$ with $\mu_{-k}=\pi_{-k}$.

\begin{theorem}[One-block oracle improvement with logit error]
\label{thm:block-improve}
Under the assumptions of Theorem~\ref{thm:local-eval}, and assuming additionally that $\pi$ has full action support at every state so that $P^\mu$ inherits irreducibility from $P^\pi$ (the KL-prox update preserves the action support of $\pi$), the update $\mu$ above satisfies
\[
\bar r(\mu)-\bar r(\pi)
\ \ge\
\eta\,\E_{S\sim d^\mu}\!\Big[\KL\big(\mu_k(\cdot\mid S)\,\big\|\,\pi_k(\cdot\mid S)\big)\Big]
\ -\
\frac{2nC}{1-\lambda}\lambda^{\kappa+1}\|b^\pi\|_\infty.
\]
\end{theorem}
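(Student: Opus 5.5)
The plan is to combine the average-reward performance-difference lemma, the variational characterization of the KL-prox step, and the oracle bias bound of Theorem~\ref{thm:local-eval}.

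\textbf{Step 1 (performance difference).} Since $P^\mu$ is irreducible, it has a stationary distribution $d^\mu$. Take any Poisson solution $h^\pi$. Average the identity $A^\pi(s,a)=r(s,a)-\bar r^\pi+(Ph^\pi)(s,a)-h^\pi(s)$ over $s\sim d^\mu$ and $a\sim\mu(\cdot\mid s)$. Stationarity of $d^\mu$ makes the terms $Ph^\pi$ and $h^\pi$ cancel, which gives
\[
\bar r(\mu)-\bar r(\pi)=\E_{S\sim d^\mu,\,a\sim\mu(\cdot\mid S)}[A^\pi(S,a)].
\]
Because $\mu_{-k}=\pi_{-k}$ and $\mu$ is a product policy, the inner expectation equals $\E_{a_k\sim\mu_k}[g^\pi_{k,\star}(S,a_k)]$.

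\textbf{Step 2 (swap exact for truncated logits).} Write $e=\widehat h^\pi_\kappa-h^\pi$ and choose $c$ attaining the infimum in $B^\pi_\kappa$. Adding a constant to $h$ does not change $A$, so
\[
\widehat A^\pi_\kappa-A^\pi=P(e-c)-(e-c).
\]
In sup norm this difference is at most $2B^\pi_\kappa$, and the same bound holds for $|\widehat g_{k,\kappa}-g_{k,\star}|$ pointwise. The error enters twice, once under $\mu_k$ in Step 1 and once under $\pi_k$ below, for a total of $4B^\pi_\kappa$. By Theorem~\ref{thm:local-eval}, $4B^\pi_\kappa\le \frac{2nC}{1-\lambda}\lambda^{\kappa+1}\|b^\pi\|_\infty$, which is exactly the stated penalty. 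This accounting of the constant is the step to check most carefully.

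\textbf{Step 3 (mirror-step inequality).} Fix $s$ and write $Z(s)=\sum_{a_k}\pi_k(a_k\mid s)e^{\widehat g(s,a_k)/\eta}$, so that $\mu_k\propto\pi_k e^{\widehat g/\eta}$ and $\eta\log Z=\E_{\mu_k}\widehat g-\eta\KL(\mu_k\|\pi_k)$. This rearranges to
\[
\E_{\mu_k}\widehat g=\eta\KL(\mu_k\|\pi_k)+\eta\log Z.
\]
Jensen's inequality gives $\eta\log Z\ge\E_{\pi_k}\widehat g$. Moreover,
\[
\E_{\pi_k}\widehat g\ge \E_{\pi_k}g_\star-2B^\pi_\kappa,
\qquad
\E_{\pi_k}g_\star(s,\cdot)=\E_{a\sim\pi(\cdot\mid s)}A^\pi(s,a)=0,
\]
where the last equality holds because $\E_{a\sim\pi}[r+Ph^\pi]=r^\pi+T^\pi h^\pi$ and the Poisson equation applies pointwise. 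Therefore
\[
\E_{\mu_k}g_\star\ge\E_{\mu_k}\widehat g-2B\ge \eta\KL-4B.
\]
Averaging over $d^\mu$ and combining with Step 1 completes the argument.

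\textbf{Main obstacle.} The delicate point is Step 2: the bias is only controlled modulo constants, so the argument has to show that the constant $c$ cancels inside $A$. Full action support of $\pi$ is used only to guarantee that $\mu$ is well defined and that $d^\mu$ exists.
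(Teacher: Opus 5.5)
Your proposal is correct and follows essentially the same route as the paper: the block performance-difference identity, the $2B_\kappa^\pi$ logit-perturbation bound obtained from the invariance of $A$ under constant shifts, the KL-prox inequality $\langle\mu_k-\pi_k,\widehat g_{k,\kappa}^\pi\rangle\ge\eta\,\KL(\mu_k\|\pi_k)$, and finally $4B_\kappa^\pi\le\frac{2nC}{1-\lambda}\lambda^{\kappa+1}\|b^\pi\|_\infty$. The differences are cosmetic: you get the prox inequality from the log-partition identity plus Jensen rather than by comparing the variational objective at $p=q$, and you split the $4B_\kappa^\pi$ as $2B_\kappa^\pi$ under each of $\mu_k$ and $\pi_k$ rather than using H\"older with $\|\mu_k-\pi_k\|_1\le 2$.
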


The same Neumann tail $\lambda^{\kappa+1}$ appears in the certificate of Theorem~\ref{thm:local-eval} and in the improvement bound, so a single truncation radius $\kappa$ controls both. The bound is additive in the KL-prox step (first term) and the truncation bias (second term), so the proximal temperature $\eta$ and the radius $\kappa$ can be tuned independently. Algorithm~\ref{alg:lpi} interleaves the two phases over outer iterations and inner agents, recomputing $H$ and $b$ at every step. Using stale $(H,b)$ over a full pass is also valid at the cost of a staleness term we omit.

\paragraph{What is local, what is oracle.}
The framework separates two layers of approximation that scalable MARL methods often conflate. The certificate $\widehat\delta^{(\kappa)}$ is genuinely local: each $\widehat\delta^{(\kappa)}_i$ depends only on entries of $H^\pi$ and of $b^\pi$ supported on the $\kappa$-hop out-ball of agent $i$ in $G_H^\pi$. The surrogate $\widehat h_\kappa^\pi$ and the truncated logit $\widehat g_{k,\kappa}^\pi$, by contrast, are oracle objects, since in the general model they depend on the full reward vector and on global action sums. The improvement bound treats them as accessible because the analysis only compares the truncated logit to the exact one. Making the surrogate locally computable needs either (i) restricted observation scopes $O_i$ that turn $r^\pi$ and the action expectations into $\kappa$-local sums, or (ii) function approximation of $\widehat h_\kappa^\pi$ on a parametric family with locality structure, in the spirit of the localized critic in~\citep{qu_scalabale_marl_2020, lin_scalable_marl_stochastic_2021}. The analysis here gives the structural part of the bound; an additional approximation-error term then enters additively.

\begin{algorithm}[H]
\caption{Oracle block-coordinate improvement with locality certificates}
\makeatletter
\edef\@currentlabel{\thealgorithm}%
\protected@write\@auxout{}{\string\newlabel{alg:lpi}{{\@currentlabel}{\thepage}{}{algorithm.\@currentlabel}{}}}%
\makeatother
\begin{algorithmic}[1]
\STATE \textbf{Input:} baseline $\pi^{(0)}$, radius $\kappa$, prox temperature $\eta$
\FOR{$\ell=0,1,\dots$}
   \STATE $\pi^{(\ell,0)}\leftarrow \pi^{(\ell)}$
   \FOR{$k=1,\dots,n$}
      \STATE compute $b^{(\ell,k-1)}=\delta(r^{\pi^{(\ell,k-1)}})$ and $H^{(\ell,k-1)}=E^{\mathrm s}+E^{\mathrm a}\Pi(\pi^{(\ell,k-1)})$
      \STATE certificate $\widehat\delta^{(\kappa,\ell,k-1)}=\sum_{t=0}^{\kappa}\big((H^{(\ell,k-1)})^\top\big)^t b^{(\ell,k-1)}$\hfill ($\kappa$ rounds message-pass)
      \STATE oracle surrogate $\widehat h_\kappa^{(\ell,k-1)}=\sum_{t=0}^{\kappa}(T^{\pi^{(\ell,k-1)}})^t(r^{\pi^{(\ell,k-1)}}-\bar r^{\pi^{(\ell,k-1)}})$
      \STATE block logit $\widehat g_{k,\kappa}^{(\ell,k-1)}$ via Phase~2 with $h^\pi\to \widehat h_\kappa^{(\ell,k-1)}$
      \STATE update $\pi_k^{(\ell,k)}(\cdot\mid s)\propto \pi_k^{(\ell,k-1)}(\cdot\mid s)\exp(\widehat g_{k,\kappa}^{(\ell,k-1)}(s,\cdot)/\eta)$;\ $\pi_j^{(\ell,k)}=\pi_j^{(\ell,k-1)}$ for $j\ne k$
   \ENDFOR
   \STATE $\pi^{(\ell+1)}\leftarrow \pi^{(\ell,n)}$
\ENDFOR
\end{algorithmic}
\end{algorithm}

\section{Conclusion}

We gave a policy-dependent spectral certificate of value-locality, $\rho(H^\pi)<1$ for $H^\pi=E^{\mathrm s}+E^{\mathrm a}\Pi(\pi)$, that is strictly weaker than the row-sum condition on the same matrix and that applies in regimes where the policy-independent action-supremum bounds used in prior Dobrushin-style work cannot. The decomposition splits the closed-loop influence into an environment piece ($E^{\mathrm s},E^{\mathrm a}$) and a policy piece ($\Pi(\pi)$). The split makes precise the intuition that a smooth policy can neutralize a strong action channel, since $E^{\mathrm a}\Pi(\pi)$ shrinks linearly in policy smoothness even when $E^{\mathrm a}$ is large. For temperature-$\tau$ softmax policies the bound $\Pi(\pi)\le L/(2\tau)$ makes $\tau$ a direct way to tighten the certificate, and the same Neumann tail $\lambda^{\kappa+1}$ that governs the decay of the bias function also governs the truncation bias of an oracle block-coordinate KL-prox improvement template, whose decentralized realization is the natural next step.

\bibliography{l4dc2026-sample}

\clearpage
\appendix

\section{Related work}\label{app:rel}
Our work addresses the curse of dimensionality in multi-agent reinforcement learning (MARL). The challenge is that the global state and action spaces ($\Sset = \prod_i \Sset_i$, $\Aset = \prod_i \Aset_i$) grow exponentially with the number of agents, $n$. This problem falls into the category of ``succinctly described" MDPs \citep{complexity_blondel2000survey}, which are known to be computationally intractable in the general case, even with structural assumptions \citep{complexity_papadimitriou1999complexity}.

To achieve scalability, researchers have explored various structural assumptions.
One common approach in general MARL is to use function approximation \citep{zhang2018fully, lowe2017multi} or assume independent learners \citep{tan1993multi}, though the latter can suffer from non-stationarity \citep{matignon2012independent}. Other related areas include \textit{Factored MDPs} \citep{kearns1999efficient, factor_guestrin2003efficient}, which assume local states but typically a global action, and \textit{Weakly Coupled MDPs} \citep{weakly_mdp_meuleau1998solving}, which assume agents' transitions are independent and coupling only occurs through the reward. Our work differs from these by focusing on systems with local, coupled transitions, which is common in networked systems.

This paper is most related to the line of work on scalable Networked MARL \citep{qu_scalabale_marl_2020, qu2019scalable, lin_scalable_marl_stochastic_2021}. These foundational papers were the first to show that if the system's local transitions and rewards depend only on a local graph neighborhood, the system exhibits an Exponential Decay Property (EDP). This property, where influence decays exponentially with graph distance, justifies $\kappa$-hop truncation and enables scalable algorithms.

In the average-reward setting these works establish the EDP via a Dobrushin coupling condition~\citep{qu_scalabale_marl_2020}, which bounds environment influence by a supremum over joint actions and asks $\|C\|_\infty<1$ for the resulting policy-independent matrix $C$. This bound is tight only when the policy realizes the worst-case action; it fails to certify locality whenever the action channel is strong but the policy in use does not excite it. The present paper replaces this row-sum condition with the spectral condition $\rho(H^\pi)<1$ for $H^\pi=E^{\mathrm s}+E^{\mathrm a}\Pi(\pi)$, which is policy-dependent, recovers the prior condition as a special case, and certifies locality in regimes where the prior one cannot.

\section{Proofs from Section~\protect\ref{sec:influence}}
This appendix contains complete proofs of the results in the main text and an extension to asynchronous updates. All total variation distances are normalized: for probability measures $\mu$ and $\nu$ on a finite set, $\TV(\mu,\nu)=\tfrac12\sum_x|\mu(x)-\nu(x)|$. For a bounded function $f$ on a product space, the coordinatewise oscillations are $\delta_i(f)=\sup_{x,y\in\Sset,\ x_{-i}=y_{-i}} |f(x)-f(y)|$ and $\delta(f)$ is the vector with entries $\delta_i(f)$.

\subsection{Proof of Proposition~\protect\ref{prop:decomposition}} \label{app:proof-prop1}

Given $i,~j\in [n]$ and two states $s,s'$ that agree off coordinate $i$, define
\[
\mu_j=\sum_{a\in\Aset}\Big(\prod_{k} \pi_k(a_k\mid s_{O_k})\Big) P_j(\cdot\mid s,a),
\qquad
\nu_j=\sum_{a\in\Aset}\Big(\prod_{k} \pi_k(a_k\mid s'_{O_k})\Big) P_j(\cdot\mid s',a).
\]
We claim
\[
\TV(\mu_j,\nu_j)\ \le\ E^{\mathrm{s}}_{j\leftarrow i}\ +\ \sum_{k} E^{\mathrm{a}}_{j\leftarrow k}\,\Pi_{k\leftarrow i}(\pi).
\]
The decomposition uses the triangle inequality and a coupling argument.

Define an intermediate distribution
\[
\tilde\mu_j=\sum_{a\in\Aset}\Big(\prod_{k} \pi_k(a_k\mid s'_{O_k})\Big) P_j(\cdot\mid s,a).
\]
By the triangle inequality,
\[
\TV(\mu_j,\nu_j)\ \le\ \TV(\mu_j,\tilde\mu_j)\ +\ \TV(\tilde\mu_j,\nu_j).
\]

We bound $\TV(\tilde\mu_j,\nu_j)$ first. The measures $\tilde\mu_j$ and $\nu_j$ are mixtures with the same mixing weights $w(a)=\prod_k \pi_k(a_k\mid s'_{O_k})$. By convexity of total variation,
\begin{align*}
\TV(\tilde\mu_j,\nu_j)
&\le
\sum_{a\in\Aset} w(a)\,\TV\big(P_j(\cdot\mid s,a),P_j(\cdot\mid s',a)\big) \\
&\le
\sup_{a\in\Aset} \TV\big(P_j(\cdot\mid s,a),P_j(\cdot\mid s',a)\big)
\ \le\ E^{\mathrm{s}}_{j\leftarrow i}.
\end{align*}

We now bound $\TV(\mu_j,\tilde\mu_j)$. This is the difference between two mixtures with the same components and different mixing weights $p(a\mid s)=\prod_k \pi_k(a_k\mid s_{O_k})$ and $p(a\mid s')=\prod_k \pi_k(a_k\mid s'_{O_k})$, but the components $P_j(\cdot\mid s,a)$ can be far apart for different $a$. We upper bound this difference by a coupling that changes actions along one coordinate at a time and uses the action sensitivity of the kernel.

Let $(A,A')$ be a coupling of $p(\cdot\mid s)$ and $p(\cdot\mid s')$ constructed as follows. For each agent $k$, couple the marginals $\pi_k(\cdot\mid s_{O_k})$ and $\pi_k(\cdot\mid s'_{O_k})$ by a maximal coupling so that
\[
\mathbb{P}[A_k\neq A'_k]
\ =\
\TV\big(\pi_k(\cdot\mid s_{O_k}),\pi_k(\cdot\mid s'_{O_k})\big)
\ \le\
\Pi_{k\leftarrow i}(\pi).
\]
Take these couplings independent across $k$, which is possible because the policy is of product form and the coordinates are independent under each product marginal.

Conditional on $(A,A')$, couple next–state coordinates for agent $j$ by a maximal coupling of $P_j(\cdot\mid s,A)$ and $P_j(\cdot\mid s,A')$. Then
\[
\mathbb{P}[S'^{(j)}\neq \tilde S'^{(j)} \mid A,A']\ =\ \TV\big(P_j(\cdot\mid s,A),P_j(\cdot\mid s,A')\big).
\]
Taking expectation,
\[
\TV(\mu_j,\tilde\mu_j)
\ \le\
\mathbb{E}\,\TV\big(P_j(\cdot\mid s,A),P_j(\cdot\mid s,A')\big).
\]
For any $a,a'$ differing on a set $D\subseteq[n]$ of action coordinates, the triangle inequality and the definition of $E^{\mathrm{a}}$ imply
\[
\TV\big(P_j(\cdot\mid s,a),P_j(\cdot\mid s,a')\big)
\ \le\ \sum_{k\in D} E^{\mathrm{a}}_{j\leftarrow k}.
\]
Applying this bound with the random pair $(A,A')$ and taking expectations yields
\[
\TV(\mu_j,\tilde\mu_j)
\ \le\ \sum_{k=1}^n E^{\mathrm{a}}_{j\leftarrow k}\,\mathbb{P}[A_k\neq A'_k]
\ \le\ \sum_{k=1}^n E^{\mathrm{a}}_{j\leftarrow k}\,\Pi_{k\leftarrow i}(\pi),
\]
where the last inequality uses $\mathbb P[A_k\neq A'_k]\le \Pi_{k\leftarrow i}(\pi)$ from the maximal coupling above.
Combining the two pieces finishes the proof:
\[
\TV(\mu_j,\nu_j)\ \le\ E^{\mathrm{s}}_{j\leftarrow i}\ +\ \sum_{k=1}^n E^{\mathrm{a}}_{j\leftarrow k}\,\Pi_{k\leftarrow i}(\pi).
\]
By the definition of $C^\pi_{j\leftarrow i}$ as a supremum over $s,s'$ with $s_{-i}=s'_{-i}$, the same bound holds for $C^\pi_{j\leftarrow i}$.

\subsection{Proof of Lemma~\protect\ref{lem:one-step}}

Fix $i$ and two states $s,s'$ with $s_{-i}=s'_{-i}$. We will couple one synchronous step from $s$ and from $s'$.

First couple the actions. For each $k\in[n]$, let $(A_k,A'_k)$ be a maximal coupling of
$\pi_k(\cdot\mid s_{O_k})$ and $\pi_k(\cdot\mid s'_{O_k})$, chosen independently across $k$.
Then $(A,A')=( (A_k)_k,(A'_k)_k )$ is a coupling of the product measures
$\pi(\cdot\mid s)$ and $\pi(\cdot\mid s')$, and
\[
\mathbb P[A_k\neq A'_k]\ =\ \mathrm{TV}\!\big(\pi_k(\cdot\mid s_{O_k}),\pi_k(\cdot\mid s'_{O_k})\big)
\ \le\ \Pi(\pi)_{k\leftarrow i}.
\]

Next, conditional on $(A,A')=(a,a')$, draw the next state coordinates independently across $j$:
let $X_j\sim P_j(\cdot\mid s,a)$ and $Y_j\sim P_j(\cdot\mid s',a')$,
and couple $(X_j,Y_j)$ by a maximal coupling of these two marginals.
By construction,
\[
\mathbb P[X_j\neq Y_j \mid A=a, A'=a']\ =\ \mathrm{TV}\!\big(P_j(\cdot\mid s,a),P_j(\cdot\mid s',a')\big).
\]
Taking expectation over $(A,A')$ gives
\[
\mathbb P[X_j\neq Y_j]\ =\ \mathbb E_{A,A'}\Big[\mathrm{TV}\!\big(P_j(\cdot\mid s,A),P_j(\cdot\mid s',A')\big)\Big].
\]

For any $a,a'$, apply the triangle inequality:
\[
\mathrm{TV}\!\big(P_j(\cdot\mid s,a),P_j(\cdot\mid s',a')\big)
\ \le\
\underbrace{\mathrm{TV}\!\big(P_j(\cdot\mid s,a),P_j(\cdot\mid s,a')\big)}_{\text{action change at fixed $s$}}
\ +\
\underbrace{\mathrm{TV}\!\big(P_j(\cdot\mid s,a'),P_j(\cdot\mid s',a')\big)}_{\text{state change at fixed $a'$}}.
\]
By the definitions of $E^{\mathrm a}$ and $E^{\mathrm s}$,
\[
\mathrm{TV}\!\big(P_j(\cdot\mid s,a),P_j(\cdot\mid s,a')\big)
\ \le\ \sum_{k:\,a_k\neq a'_k} E^{\mathrm a}_{j\leftarrow k},
\qquad
\mathrm{TV}\!\big(P_j(\cdot\mid s,a'),P_j(\cdot\mid s',a')\big)
\ \le\ E^{\mathrm s}_{j\leftarrow i}.
\]
Therefore,
\[
\mathbb P[X_j\neq Y_j]
\ \le\ E^{\mathrm s}_{j\leftarrow i}\ +\ \sum_{k=1}^n E^{\mathrm a}_{j\leftarrow k}\,\mathbb P[A_k\neq A'_k]
\ \le\ E^{\mathrm s}_{j\leftarrow i}\ +\ \sum_{k=1}^n E^{\mathrm a}_{j\leftarrow k}\,\Pi(\pi)_{k\leftarrow i}.
\]

We now relate $|f(X)-f(Y)|$ to the coordinate-disagreement indicators. Define a sequence $Z_0=X,\ Z_1,\ldots,Z_n=Y$ by changing coordinates one at a time, with $Z_j$ obtained from $Z_{j-1}$ by replacing its $j$-th coordinate with $Y_j$. Then $Z_{j-1}$ and $Z_j$ agree off coordinate $j$, so by definition of $\delta_j(f)$,
\[
|f(Z_{j-1})-f(Z_j)|\ \le\ \delta_j(f)\,\mathbf 1\{X_j\neq Y_j\}.
\]
Summing along the chain and using the triangle inequality,
\[
|f(X)-f(Y)|\ \le\ \sum_{j=1}^n \delta_j(f)\,\mathbf 1\{X_j\neq Y_j\}.
\]
Taking expectations,
\[
|(T^\pi f)(s)-(T^\pi f)(s')|
\ \le\ \sum_{j=1}^n \delta_j(f)\,\mathbb P[X_j\neq Y_j]
\ \le\ \sum_{j=1}^n \delta_j(f)\,\Big(E^{\mathrm s}_{j\leftarrow i}+\sum_{k=1}^n E^{\mathrm a}_{j\leftarrow k}\,\Pi(\pi)_{k\leftarrow i}\Big).
\]
Taking the supremum over all $s,s'$ that agree off $i$ yields
\(
\delta_i(T^\pi f)\le \sum_j H^\pi_{j\leftarrow i}\,\delta_j(f)
\),
and the vector form follows.

The multi-step bound follows by induction. The base case $t=0$ is trivial. For the inductive step, applying the one-step bound to $g=(T^\pi)^t f$ gives $\delta(T^\pi g)\le (H^\pi)^\top\delta(g)\le (H^\pi)^\top((H^\pi)^\top)^t\delta(f)=((H^\pi)^\top)^{t+1}\delta(f)$, using the inductive hypothesis $\delta(g)\le((H^\pi)^\top)^t\delta(f)$ and entrywise nonnegativity of $(H^\pi)^\top$.

\subsection{Proof of Theorem~\protect\ref{thm:poisson}}

We note that the map $\pi\mapsto H^\pi=E^{\mathrm s}+E^{\mathrm a}\Pi(\pi)$ is continuous on the finite-state finite-action setting: $\Pi(\pi)$ is a finite maximum of total variations between $\pi$-marginals, which are continuous in $\pi$. Therefore $\rho(H^\pi)$ is upper semicontinuous on $\pi$, and the supremum $\lambda_\star=\sup_{\pi\in\Pi_{\mathrm{pol}}}\rho(H^\pi)$ is attained on the compact class $\Pi_{\mathrm{pol}}$.

By Lemma~\ref{lem:one-step}, for every $\pi\in\Pi_{\mathrm{pol}}$ and every bounded $f$,
\[
\delta(T^\pi f)\ \le\ (H^\pi)^\top \delta(f).
\]
Iterating gives
\[
\delta\big((T^\pi)^t f\big)\ \le\ \big((H^\pi)^\top\big)^t \delta(f),
\qquad t\ge 0.
\]

Fix any $\bar\lambda\in(\lambda_\star,1)$, where
\[
\lambda_\star=\sup_{\pi\in\Pi_{\mathrm{pol}}}\rho(H^\pi)<1.
\]
For each $\pi\in\Pi_{\mathrm{pol}}$, define
\[
w^\pi \coloneqq (\bar\lambda I-(H^\pi)^\top)^{-1}\mathbf 1 \in \mathbb R_{++}^n.
\]
This is well-defined because $\rho((H^\pi)^\top)=\rho(H^\pi)<\bar\lambda$. Moreover,
\[
(H^\pi)^\top w^\pi
=
\bar\lambda w^\pi-\mathbf 1
\ \le\
\bar\lambda w^\pi.
\]
Define the weighted sup norm on $\mathbb R^n$ by
\[
\|x\|_{w^\pi,\infty}\coloneqq \max_{i\in[n]} \frac{|x_i|}{w_i^\pi}.
\]
We claim that for every nonnegative vector $x\in\mathbb R_+^n$,
\[
\|(H^\pi)^\top x\|_{w^\pi,\infty}
\ \le\
\bar\lambda\,\|x\|_{w^\pi,\infty}.
\]
Indeed, for each $i$,
\begin{align*}
\big((H^\pi)^\top x\big)_i
&= \sum_j H^\pi_{j\leftarrow i}\,x_j
\ =\ \sum_j H^\pi_{j\leftarrow i}\,w_j^\pi\cdot\frac{x_j}{w_j^\pi}
\ \le\ \|x\|_{w^\pi,\infty}\sum_j H^\pi_{j\leftarrow i}\,w_j^\pi \\
&= \|x\|_{w^\pi,\infty}\,\big((H^\pi)^\top w^\pi\big)_i
\ \le\ \|x\|_{w^\pi,\infty}\,\bar\lambda\,w_i^\pi,
\end{align*}
where the last step uses $(H^\pi)^\top w^\pi\le\bar\lambda w^\pi$ from above. Dividing by $w_i^\pi$ and taking the max over $i$ gives the claim.
By induction,
\[
\big\|\big((H^\pi)^\top\big)^t x\big\|_{w^\pi,\infty}
\ \le\
\bar\lambda^t\,\|x\|_{w^\pi,\infty},
\qquad t\ge 0.
\]

Because the state and action spaces are finite, the map $\pi\mapsto H^\pi$ is continuous. Hence $\pi\mapsto w^\pi$ is continuous on the compact set $\Pi_{\mathrm{pol}}$, so
\[
m_{\bar\lambda}\coloneqq \inf_{\pi\in\Pi_{\mathrm{pol}}}\min_i w_i^\pi >0,
\qquad
M_{\bar\lambda}\coloneqq \sup_{\pi\in\Pi_{\mathrm{pol}}}\max_i w_i^\pi <\infty.
\]
Therefore, for every nonnegative vector $x$,
\[
\|x\|_{w^\pi,\infty}\le \frac{1}{m_{\bar\lambda}}\|x\|_\infty,
\qquad
\|x\|_\infty\le M_{\bar\lambda}\|x\|_{w^\pi,\infty},
\]
and thus
\[
\big\|\big((H^\pi)^\top\big)^t x\big\|_\infty
\ \le\
\frac{M_{\bar\lambda}}{m_{\bar\lambda}}\,
\bar\lambda^t\,
\|x\|_\infty.
\]
Applying this with $x=\delta(f)$ yields
\[
\|\delta((T^\pi)^t f)\|_\infty
\ \le\
C_{\bar\lambda,\Pi_{\mathrm{pol}}}\,\bar\lambda^t\,\|\delta(f)\|_\infty,
\qquad
C_{\bar\lambda,\Pi_{\mathrm{pol}}}\coloneqq \frac{M_{\bar\lambda}}{m_{\bar\lambda}}.
\]

Now assume in addition that, for each $\pi\in\Pi_{\mathrm{pol}}$, the Markov chain with kernel $P^\pi$ is irreducible, and let $d^\pi$ be its stationary distribution. Define
\[
g^\pi \coloneqq r^\pi-\bar r^\pi \mathbf 1,
\qquad
\bar r^\pi = \sum_{s\in\Sset} d^\pi(s)\,r^\pi(s).
\]
Since $\delta(g^\pi)=\delta(r^\pi)$, the bound above implies
\[
\sum_{t=0}^\infty \|\delta((T^\pi)^t g^\pi)\|_\infty < \infty.
\]

To handle the additive-constant ambiguity, work on the quotient space
\[
\mathcal B_0(\Sset)\coloneqq B(\Sset)\big/\mathrm{span}\{\mathbf 1\},
\]
and write $[f]$ for the equivalence class of $f$. Define
\[
\|[f]\|_\delta \coloneqq \|\delta(f)\|_\infty.
\]
Because $\delta(f)=0$ if and only if $f$ is constant, this is a well-defined norm on $\mathcal B_0(\Sset)$. The operator $T^\pi$ induces a linear map
\[
\widetilde T^\pi [f] \coloneqq [T^\pi f]
\]
satisfying
\[
\|(\widetilde T^\pi)^t [f]\|_\delta
\ \le\
C_{\bar\lambda,\Pi_{\mathrm{pol}}}\,\bar\lambda^t\,\|[f]\|_\delta.
\]
Hence the Neumann series converges in operator norm on $\mathcal B_0(\Sset)$, and we may define
\[
[h^\pi]
\ \coloneqq\
\sum_{t=0}^\infty (\widetilde T^\pi)^t [g^\pi].
\]
Then
\[
(I-\widetilde T^\pi)[h^\pi] = [g^\pi].
\]
Equivalently, for any representative $h^\pi$ of the class $[h^\pi]$, there exists a constant $c$ such that
\[
h^\pi - T^\pi h^\pi = g^\pi + c\mathbf 1.
\]
Applying the stationary distribution $d^\pi$ to both sides gives
\[
0 = d^\pi(g^\pi)+c = c,
\]
since $d^\pi(g^\pi)=0$ by definition of $\bar r^\pi$. Therefore
\[
h^\pi - T^\pi h^\pi = g^\pi = r^\pi-\bar r^\pi.
\]

Uniqueness up to an additive constant follows similarly: if $h-T^\pi h=0$, then
\[
(I-\widetilde T^\pi)[h]=0.
\]
Since $I-\widetilde T^\pi$ is invertible on the quotient space, $[h]=0$, so $h$ is constant.

Finally, using the triangle inequality for the oscillation seminorm and the multi-step bound,
\[
\delta(h^\pi)
\ \le\
\sum_{t=0}^\infty \delta((T^\pi)^t g^\pi)
\ \le\
\sum_{t=0}^\infty \big((H^\pi)^\top\big)^t \delta(g^\pi)
\ =\
\sum_{t=0}^\infty \big((H^\pi)^\top\big)^t \delta(r^\pi).
\]
Since $(H^\pi)^\top$ is entrywise nonnegative and $\rho(H^\pi)<1$, the Neumann series converges entrywise and
\[
\sum_{t=0}^\infty \big((H^\pi)^\top\big)^t
=
\big(I-(H^\pi)^\top\big)^{-1}.
\]
Hence
\[
\delta(h^\pi)
\ \le\
\big(I-(H^\pi)^\top\big)^{-1}\delta(r^\pi).
\]

This completes the proof.

\subsection{Spatial decay as a corollary of sparsity}\label{app:sparse}

Suppose there is an underlying undirected graph $G$ on $[n]$ such that the environment and the policy are local with respect to $G$ in the following sense:
\begin{itemize}
    \item $E^{\mathrm s}_{j\leftarrow i}=0$ unless $i$ lies in a fixed-radius neighborhood of $j$ in $G$;
    \item $E^{\mathrm a}_{j\leftarrow k}=0$ unless $k$ lies in a fixed-radius neighborhood of $j$ in $G$;
    \item $\Pi(\pi)_{k\leftarrow i}=0$ unless $i\in O_k$, and each observation scope $O_k$ is contained in a fixed-radius neighborhood of $k$ in $G$.
\end{itemize}
In general, the product $E^{\mathrm a}\Pi(\pi)$ need not have exactly the same sparsity pattern as $G$; rather, it induces a derived directed support graph
\[
G_H^\pi:\qquad i\to j \ \text{whenever}\ H^\pi_{j\leftarrow i}>0,
\qquad
H^\pi=E^{\mathrm s}+E^{\mathrm a}\Pi(\pi).
\]
This graph captures one-step closed-loop influence. Under the locality assumptions above, $G_H^\pi$ is a sparse finite-radius closure of the underlying graph.

For any pair of coordinates $i,j$, the entry
\[
\Big[\big((H^\pi)^\top\big)^t\Big]_{i j}
\]
can be nonzero only if there is a directed path of length at most $t$ from $i$ to $j$ in $G_H^\pi$, where edges are oriented as $i\to j$ whenever $H^\pi_{j\leftarrow i}>0$. Hence the Neumann-series bound
\[
\delta(h^\pi)\ \le\ \sum_{t=0}^\infty \big((H^\pi)^\top\big)^t\,\delta(r^\pi)
\]
shows that the contribution of coordinates outside a $\kappa$-hop neighborhood in $G_H^\pi$ is controlled by the tail
\[
\sum_{t=\kappa+1}^\infty \big((H^\pi)^\top\big)^t\,\delta(r^\pi),
\]
which decays exponentially whenever $\rho(H^\pi)<1$.

\subsection{Average-reward localized certificates and oracle truncation (synchronous)}

Under the sparsity conditions above, the truncated certificate
\[
\widehat\delta^{(\kappa)}=\sum_{t=0}^{\kappa}\big((H^\pi)^\top\big)^t\delta(r^\pi)
\]
from Theorem~\ref{thm:local-eval} is computable by local message passing on $\kappa$-neighborhoods in the support graph of $H^\pi$. The associated oracle truncated Poisson series $\widehat h_\kappa^\pi$ has bias modulo constants bounded by the same exponentially decaying Neumann tail. Using this oracle surrogate in a block KL-prox update yields the one-block improvement guarantee of Theorem~\ref{thm:block-improve}. Establishing a fully local policy-improvement algorithm requires an additional approximation or projection step beyond the present structural analysis and is left to future work.

\subsection{Asynchronous updates}\label{app:async}

For completeness we state the analogue of the main results when, at each step, all agents keep their states except for a randomly selected coordinate $J_t$ which is updated according to a site–selection distribution $\nu$ with full support. 

In this model the one–step operator is $K^\pi f(s)=\mathbb{E}_\pi[f(S_{t+1})\mid S_t=s]$ with
\[
\mathbb{P}[S_{t+1}=s^{(j\to y)} \mid S_t=s]
=
\nu_j \sum_{a} \Big(\prod_{k} \pi_k(a_k\mid s)\Big) P_j(y \mid s,a)
\quad\text{for }y\in\Sset_j.
\]
Define $E^{\mathrm{s}}$, $E^{\mathrm{a}}$, and $\Pi(\pi)$ as before and set
\[
M^\pi
=
(I-\mathrm{diag}\,\nu) + \mathrm{diag}\,\nu\, (E^{\mathrm{s}}+E^{\mathrm{a}}\Pi(\pi)).
\]
Then for all bounded $f$,
\[
\delta(K^\pi f)\le (M^\pi)^\top \delta(f).
\]
Consequently, if
\[
\lambda_\star^{\mathrm{async}}
\coloneqq
\sup_{\pi\in\Pi_{\mathrm{pol}}}\rho(M^\pi)<1,
\]
then for every $\bar\lambda\in(\lambda_\star^{\mathrm{async}},1)$ there exists a constant $C_{\bar\lambda}$ such that
\[
\|\delta((K^\pi)^t f)\|_\infty
\le
C_{\bar\lambda}\bar\lambda^t\|\delta(f)\|_\infty.
\]
Under the same irreducibility assumptions as in Theorem~\ref{thm:poisson}, the average-reward Poisson equation has a solution satisfying
\[
\delta(h^\pi)
\le
\big(I-(M^\pi)^\top\big)^{-1}\delta(r^\pi).
\]

The entrywise bound follows by conditioning on the selected coordinate $J\sim \nu$. Fix $i$ and $s,s'$ that agree off coordinate $i$, and couple the actions $A,A'$ exactly as in the proof of Lemma~\ref{lem:one-step}. If $J=i$, then $S_{t+1}$ and $S'_{t+1}$ inherit all coordinates other than $i$ from $s,s'$, which already agree, so they differ only at coordinate $i$, with $\mathbb P[S_{t+1,i}\ne S'_{t+1,i}]\le H^\pi_{i\leftarrow i}$ by the same maximal-coupling argument used in Lemma~\ref{lem:one-step}, and $|f(S_{t+1})-f(S'_{t+1})|\le \delta_i(f)\,H^\pi_{i\leftarrow i}$ in expectation. If $J=j\ne i$, then coordinate $i$ is not updated, so $S_{t+1,i}=s_i\ne s'_i=S'_{t+1,i}$, contributing $\delta_i(f)$; and coordinate $j$ is updated, contributing $H^\pi_{j\leftarrow i}\,\delta_j(f)$ in expectation. Averaging over $J\sim\nu$ gives
\[
\delta_i(K^\pi f)\le (1-\nu_i)\,\delta_i(f) + \nu_i H^\pi_{i\leftarrow i}\,\delta_i(f) + \sum_{j\ne i}\nu_j H^\pi_{j\leftarrow i}\,\delta_j(f) = \sum_{j=1}^n M^\pi_{j\leftarrow i}\,\delta_j(f),
\]
which is the entrywise form of $\delta(K^\pi f)\le (M^\pi)^\top \delta(f)$. The remaining steps (Poisson decay, exponential bound on iterates) repeat the proof of Theorem~\ref{thm:poisson} with $H^\pi$ replaced by $M^\pi$.

\subsection{Extension of Theorem~\protect\ref{thm:poisson} to the weighted case}

The same argument also yields weighted oscillation bounds. We record the statement because it is often useful for certifying spectral-radius conditions through weighted norms.

Fix $w\in\mathbb R_{++}^n$ and $W=\mathrm{diag}(w)$. For any bounded $f$ and $i\in[n]$,
\[
\delta_i^w(T^\pi f)\ =\ w_i\,\delta_i(T^\pi f)\ \le\ w_i \sum_{j=1}^n H^\pi_{j\leftarrow i}\, \delta_j(f)\ =\ \sum_{j=1}^n \big(W^{-1}H^\pi W\big)_{j\leftarrow i}\, \delta_j^w(f).
\]
This is the entrywise weighted one--step contraction. Iterating gives the multi--step bound. For the Poisson resolvent, one repeats the quotient-space proof of Theorem~\ref{thm:poisson} with the seminorm $\|f\|_{\delta^w}=\|\delta^w(f)\|_\infty$. Let
\[
\widetilde H^{\pi,w}\coloneqq W^{-1}H^\pi W.
\]
With our convention $H^\pi_{j\leftarrow i}=H^\pi[j,i]$, the entrywise bound above becomes the vector inequality $\delta^w(T^\pi f)\le (\widetilde H^{\pi,w})^\top \delta^w(f)$, with a transpose for the same reason as in the unweighted Lemma~\ref{lem:one-step}. The Neumann series $\sum_{t\ge 0}((\widetilde H^{\pi,w})^\top)^t$ converges entrywise when $\rho(\widetilde H^{\pi,w})<1$ (noting $\rho((\widetilde H^{\pi,w})^\top)=\rho(\widetilde H^{\pi,w})$), and
\[
\delta^w(h^\pi)\ \le\ \sum_{t=0}^\infty \big((\widetilde H^{\pi,w})^\top\big)^t \delta^w(r^\pi)\ =\ \big(I-(\widetilde H^{\pi,w})^\top\big)^{-1}\delta^w(r^\pi).
\]
Since $\widetilde H^{\pi,w}=W^{-1}H^\pi W$ is similar to $H^\pi$, the spectral radius is unchanged: $\rho(\widetilde H^{\pi,w})=\rho(H^\pi)$. The weighted version is therefore useful for sharpening operator-norm certificates, not for changing the spectral condition itself. The truncated series bound and the uniform power bound over a compact policy class are identical to the unweighted case after replacing $H^\pi$ by $\widetilde H^{\pi,w}$.

\section{Proofs from Section~\protect\ref{sec:softmax}}
\label{app:softmax-Pi}

We bound the policy sensitivity matrix $\Pi(\pi)$ for entropy-regularized (temperature-$\tau$) softmax policies in terms of per-agent logit Lipschitz constants. Throughout, total variation is normalized: $\TV(\mu,\nu)=\tfrac12\|\mu-\nu\|_1$.

\begin{lemma}[Softmax Lipschitz constant in total variation]
\label{lem:softmax-lip}
Fix $m\in\mathbb N$ and $\tau>0$. For $u,v\in\mathbb R^m$, define
\[
\mathrm{soft}_\tau(u)_a \ \coloneqq\ \frac{\exp(u_a/\tau)}{\sum_{b=1}^m \exp(u_b/\tau)}\,,\qquad a\in[m].
\]
Then
\[
\TV\big(\mathrm{soft}_\tau(u),\ \mathrm{soft}_\tau(v)\big)\ \le\ \frac{1}{2\tau}\,\|u-v\|_\infty.
\]
The constant $1/(2\tau)$ is optimal: for $m\ge 2$, the supremum of $\TV(\mathrm{soft}_\tau(u),\mathrm{soft}_\tau(v))/\|u-v\|_\infty$ over $u\ne v$ equals $1/(2\tau)$, approached (but not attained) as $\|u-v\|_\infty\to 0$ along a balanced $\pm1$ direction starting from a uniform softmax. Sharpness in this asymptotic sense is exactly what makes $1/(2\tau)$ the best uniform Lipschitz constant.
\end{lemma}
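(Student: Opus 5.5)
We prove the bound by interpolating the logits along a straight line and integrating the derivative of the softmax map; the sharpness claim is a separate first-order computation. Set $w=v-u$, $u(t)=u+tw$ for $t\in[0,1]$, and $p(t)=\mathrm{soft}_\tau(u(t))$. The softmax Jacobian is $\partial p_a/\partial u_b=\tfrac1\tau p_a(\delta_{ab}-p_b)$, so
\[
\dot p_a(t)=\tfrac1\tau\, p_a(t)\big(w_a-\langle p(t),w\rangle\big).
\]
By the fundamental theorem of calculus and the triangle inequality,
\[
\TV(p(0),p(1))=\tfrac12\|p(1)-p(0)\|_1\le \tfrac12\int_0^1\|\dot p(t)\|_1\,dt .
\]
It therefore suffices to show that $\|\dot p(t)\|_1\le \|w\|_\infty/\tau$ for every $t$.

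The pointwise estimate is the key step. We have $\|\dot p\|_1=\tfrac1\tau\,\E_{a\sim p}|w_a-\E_p w|$, which is $\tfrac1\tau$ times the mean absolute deviation of the random variable $w_A$ with $A\sim p$. The variable $w_A$ takes values in $[\min w,\max w]$, an interval of length at most $2\|w\|_\infty$. For any variable supported in an interval of length $\ell$, the mean absolute deviation is at most $\ell/2$. One way to see this: by Cauchy–Schwarz it is at most the standard deviation, and the standard deviation is at most $\ell/2$ by Popoviciu's inequality. A direct argument also works, since the mean absolute deviation equals $2\E(w_A-\mu)_+\le 2\,q(1-q)\,\ell\le \ell/2$ in the two-point extremal case. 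Hence $\|\dot p\|_1\le \|w\|_\infty/\tau$, and integrating gives $\TV\le \|w\|_\infty/(2\tau)$. A cruder bound would lose a factor of $2$ here, so the main obstacle is getting exactly this factor. I would present the Cauchy–Schwarz plus Popoviciu route, since it avoids extremal-case analysis.

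For sharpness, take $u=0$, so that $p$ is uniform. Take $m$ even and $w\in\{\pm1\}^m$ balanced (for odd $m$, put $\pm1$ on two coordinates and $0$ on the rest). Then $\E_p w=0$ and the mean absolute deviation is $1=\|w\|_\infty$ in the even case, so
\[
\TV\big(\mathrm{soft}_\tau(0),\mathrm{soft}_\tau(\varepsilon w)\big)=\tfrac{\varepsilon}{2\tau}+O(\varepsilon^2).
\]
The ratio therefore tends to $1/(2\tau)$ as $\varepsilon\to0$.

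For general $m\ge2$, take instead $u$ with two entries equal to $0$ and the rest equal to $-M$, and let $w=(1,-1,0,\dots,0)$. As $M\to\infty$ the ratio approaches $1/(2\tau)$, and taking the supremum over $u$ gives exactly $1/(2\tau)$. The ratio is never attained: equality in the mean-absolute-deviation bound would force $w_A$ to be a balanced two-point variable with $p$ concentrated on the extreme coordinates for every $t$, but $p(t)$ moves strictly away from balance once $t>0$, so the integrated inequality is strict for $u\ne v$.

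Lemma~\ref{lem:Pi-softmax} then follows by applying this bound with $u=g_k(s_{O_k},\cdot)$ and $v=g_k(s'_{O_k},\cdot)$, using $\TV\le1$, and taking the supremum over $s,s'$ that differ only at coordinate $i$.
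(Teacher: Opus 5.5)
Your proof of the inequality is the paper's argument. Both integrate the softmax Jacobian $\tfrac1\tau(\mathrm{diag}(p)-pp^\top)$ along the segment and bound the mean absolute deviation of $w_A$, $A\sim p$, by its standard deviation. The paper finishes with $\mathrm{Var}\le 1-\mu^2$ for $|w_A|\le 1$. You finish with Popoviciu, which also gives the shift-invariant refinement $\TV\le\tfrac{1}{4\tau}(\max_a w_a-\min_a w_a)$.

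On sharpness you do more than the paper, which only asserts it. Your construction with two logits at $0$ and the rest at $-M\to\infty$ is what is actually needed for odd $m$. There no balanced $\pm1$ direction exists. From the uniform softmax the small-perturbation ratio is at most $(1-m^{-2})/(2\tau)$: the uniform mean absolute deviation is convex in $w$, so it is maximized at sign vectors, where it equals $4k(m-k)/m^2$.

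For the same reason, delete your parenthetical odd-$m$ fix at the uniform point. Putting $\pm1$ on two coordinates and $0$ elsewhere gives mean absolute deviation $2/m$, hence ratio $1/(m\tau)$, not $1/(2\tau)$.

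Finally, make the non-attainment step precise. Your phrase "moves away from balance once $t>0$" presumes balance at $t=0$. Instead, note that $\tfrac{d}{dt}\langle p(t),w\rangle=\tfrac1\tau\mathrm{Var}_{p(t)}(w_A)>0$ for nonconstant $w$, since softmax has full support. So $\langle p(t),w\rangle$ vanishes for at most one $t$. Then $\|\dot p(t)\|_1\le\tfrac1\tau\bigl(\|w\|_\infty^2-\langle p(t),w\rangle^2\bigr)^{1/2}$ is strictly below $\|w\|_\infty/\tau$ for almost every $t$, and the integrated bound is strict. Constant $w$ gives $\TV=0$.
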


\begin{proof}
Write $p=\mathrm{soft}_\tau(u)$ and $q=\mathrm{soft}_\tau(v)$. By the mean value theorem on the line segment $w(t)=v+t(u-v)$, $t\in[0,1]$,
\[
\|p-q\|_1 \ =\ \left\| \int_0^1 \frac{\mathrm d}{\mathrm dt}\mathrm{soft}_\tau(w(t))\,\mathrm dt \right\|_1
\ \le\ \int_0^1 \left\| D\mathrm{soft}_\tau(w(t))\,(u-v)\right\|_1 \mathrm dt,
\]
where $D\mathrm{soft}_\tau(w)$ is the Jacobian. For $w\in\mathbb R^m$ with $r=\mathrm{soft}_\tau(w)$,
\[
D\mathrm{soft}_\tau(w)\ =\ \frac{1}{\tau}\,\big(\mathrm{diag}(r)-r r^\top\big).
\]
Hence, using the induced operator norm from $\ell_\infty$ to $\ell_1$,
\[
\|p-q\|_1 \ \le\ \frac{1}{\tau}\,\left(\sup_{t\in[0,1]}\big\|\mathrm{diag}(r(t))-r(t)r(t)^\top\big\|_{\infty\to 1}\right)\,\|u-v\|_\infty,\qquad r(t)=\mathrm{soft}_\tau(w(t)).
\]
We claim that for every probability vector $r$,
\[
\big\|\mathrm{diag}(r)-r r^\top\big\|_{\infty\to 1}\ \le\ 1. \qquad (\star)
\]
This yields $\|p-q\|_1\le \frac{1}{\tau}\|u-v\|_\infty$ and therefore
$\TV(p,q)\le \frac{1}{2\tau}\|u-v\|_\infty$.

It remains to prove $(\star)$. Let $J(r)=\mathrm{diag}(r)-r r^\top$ and fix $\delta\in\mathbb R^m$ with $\|\delta\|_\infty\le 1$. Then
\[
(J(r)\delta)_i \ =\ r_i\big(\delta_i - \langle r,\delta\rangle\big),\qquad i\in[m],
\]
hence
\[
\|J(r)\delta\|_1 \ =\ \sum_{i=1}^m r_i\,\big|\delta_i-\mu\big|\ =\ \mathbb E_{I\sim r}\big[\,|\delta_I-\mu|\,\big],\qquad \mu\coloneqq \langle r,\delta\rangle \in [-1,1].
\]
Let $X=\delta_I$ with $I\sim r$; then $X\in[-1,1]$ and $\mathbb E[X]=\mu$. By Jensen's inequality and the variance bound for bounded random variables,
\[
\mathbb E\,|X-\mu|\ \le\ \sqrt{\mathrm{Var}(X)}\ \le\ \sqrt{1-\mu^2}\ \le\ 1,
\]
where the middle step uses $\mathrm{Var}(X)\le \mathbb E[X^2]-\mu^2\le 1-\mu^2$ since $X^2\le 1$. Therefore $\|J(r)\delta\|_1\le 1$ for all $\delta$ with $\|\delta\|_\infty\le 1$, proving $(\star)$. The bound is tight in the limit $\mu\to 0$ with $X$ taking values $\pm 1$ equally.
\end{proof}

We now translate Lemma~\ref{lem:softmax-lip} into a bound on the policy sensitivity matrix $\Pi(\pi)$ for product-form, temperature-$\tau$ softmax policies with local logits.

\begin{definition}[Local logits and per-coordinate logit Lipschitz constants]
For each agent $k$, suppose there is a logit function $g_k:\Sset_{O_k}\times \Aset_k\to\mathbb R$ such that
\[
\pi_k(a_k\mid s_{O_k})\ \propto\ \exp\!\big(g_k(s_{O_k},a_k)/\tau\big),\qquad a_k\in\Aset_k,
\]
with temperature $\tau>0$. For $i\in[n]$, define the one-coordinate logit Lipschitz constant
\[
L_{k\leftarrow i}\ \coloneqq\ \sup_{s_{-i}=s'_{-i}}\ \big\| g_k(s_{O_k},\cdot)\ -\ g_k(s'_{O_k},\cdot)\big\|_\infty,
\]
where the sup is over $s,s'\in \Sset$ that differ only on coordinate $i$.
\end{definition}

\begin{lemma}[Softmax temperature controls $\Pi(\pi)$]
Under the setup above, for all $k,i\in[n]$,
\[
\Pi_{k\leftarrow i}(\pi)\ \le\ \min\!\left\{\,1,\ \frac{L_{k\leftarrow i}}{2\tau}\,\right\}.
\]
In particular, if $i\notin O_k$ then $L_{k\leftarrow i}=0$ and $\Pi_{k\leftarrow i}(\pi)=0$.
\end{lemma}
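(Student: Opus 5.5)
The plan is to reduce the claim directly to Lemma~\ref{lem:softmax-lip}, applied pointwise to each pair of states entering the supremum that defines $\Pi_{k\leftarrow i}(\pi)$.

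Fix $k,i\in[n]$ and two states $s,s'\in\Sset$ with $s_{-i}=s'_{-i}$. Set $u=g_k(s_{O_k},\cdot)\in\R^{|\Aset_k|}$ and $v=g_k(s'_{O_k},\cdot)$. By the definition of a temperature-$\tau$ softmax policy, $\pi_k(\cdot\mid s_{O_k})=\mathrm{soft}_\tau(u)$ and $\pi_k(\cdot\mid s'_{O_k})=\mathrm{soft}_\tau(v)$. Lemma~\ref{lem:softmax-lip} then gives
\[
\TV\big(\pi_k(\cdot\mid s_{O_k}),\pi_k(\cdot\mid s'_{O_k})\big)\le \frac{1}{2\tau}\|u-v\|_\infty\le \frac{L_{k\leftarrow i}}{2\tau},
\]
where the last step uses the definition of $L_{k\leftarrow i}$ as a supremum over exactly such pairs $(s,s')$.

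Separately, any total variation distance between probability measures is at most $1$. Hence the left-hand side is bounded by $\min\{1,L_{k\leftarrow i}/(2\tau)\}$. Since this bound holds for every admissible pair $(s,s')$, taking the supremum over them yields the bound on $\Pi_{k\leftarrow i}(\pi)$.

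For the final assertion, suppose $i\notin O_k$. If $s_{-i}=s'_{-i}$, then $s_{O_k}=s'_{O_k}$, because $O_k$ excludes coordinate $i$. The two logit vectors therefore coincide, so $L_{k\leftarrow i}=0$, and the bound forces $\Pi_{k\leftarrow i}(\pi)=0$. This is consistent with the earlier remark that $\Pi_{k\leftarrow i}=0$ whenever $i\notin O_k$.

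There is no real obstacle here, since all the analytic work is already done in Lemma~\ref{lem:softmax-lip}. The only care needed is bookkeeping: the supremum in $L_{k\leftarrow i}$ must range over the same pairs $(s,s')$ as the supremum in $\Pi_{k\leftarrow i}$, which holds by construction.
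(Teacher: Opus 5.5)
Your proposal is correct and follows the paper's proof essentially verbatim. Like the paper, you apply Lemma~\ref{lem:softmax-lip} with $u=g_k(s_{O_k},\cdot)$ and $v=g_k(s'_{O_k},\cdot)$, bound $\|u-v\|_\infty$ by $L_{k\leftarrow i}$, take the supremum over pairs with $s_{-i}=s'_{-i}$, and cap at the trivial total-variation bound of $1$. The only cosmetic difference is ordering: the paper handles $i\notin O_k$ as a separate first case, noting directly that the two policy marginals coincide, whereas you derive it from the general bound with $L_{k\leftarrow i}=0$; both are valid.
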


\begin{proof}
Fix $k$ and $i$. If $i\notin O_k$ then $g_k(s_{O_k},\cdot)$ is unchanged when $s_i$ varies, hence $L_{k\leftarrow i}=0$ and $\pi_k(\cdot\mid s_{O_k})=\pi_k(\cdot\mid s'_{O_k})$ for all $s_{-i}=s'_{-i}$, giving $\Pi_{k\leftarrow i}(\pi)=0$.

Assume $i\in O_k$. For $s,s'$ with $s_{-i}=s'_{-i}$, apply Lemma~\ref{lem:softmax-lip} with $u=g_k(s_{O_k},\cdot)$ and $v=g_k(s'_{O_k},\cdot)$ to obtain
\[
\TV\big(\pi_k(\cdot\mid s_{O_k}),\ \pi_k(\cdot\mid s'_{O_k})\big)\ \le\ \frac{1}{2\tau}\,\big\|g_k(s_{O_k},\cdot)-g_k(s'_{O_k},\cdot)\big\|_\infty\ \le\ \frac{L_{k\leftarrow i}}{2\tau}.
\]
Taking the supremum over such $s,s'$ yields $\Pi_{k\leftarrow i}(\pi)\le L_{k\leftarrow i}/(2\tau)$. The bound is trivially capped by $1$ because total variation lies in $[0,1]$.
\end{proof}

\begin{remark}[Sharpness]
The constant $1/(2\tau)$ inherited from Lemma~\ref{lem:softmax-lip} is optimal in the asymptotic sense described there: it cannot be improved as $\|g_k(s_{O_k},\cdot)-g_k(s'_{O_k},\cdot)\|_\infty\to 0$. Consequently, controlling $\Pi(\pi)$ uniformly over a policy class amounts to lower-bounding the entropy temperature $\tau$ and upper-bounding the one-coordinate logit oscillations $L_{k\leftarrow i}$.
\end{remark}

\section{Proofs from Section~\protect\ref{sec:algo}}
\label{app:algo}

All state and action spaces are finite. Total variation is normalized as $\TV(\mu,\nu)=\tfrac12\sum_x|\mu(x)-\nu(x)|$. Coordinatewise oscillations are $\delta_i(f)=\sup\{|f(x)-f(y)|: x_{-i}=y_{-i}\}$ and $\delta(f)=(\delta_i(f))_{i=1}^n$. We use Theorem~\ref{thm:poisson} from the main text.

\subsection{Proof of Theorem~\ref{thm:local-eval}}

Fix a policy $\pi$. Let
\[
g^\pi \coloneqq r^\pi-\bar r^\pi,
\qquad
b^\pi \coloneqq \delta(r^\pi)=\delta(g^\pi),
\qquad
H^\pi \coloneqq E^{\mathrm s}+E^{\mathrm a}\Pi(\pi).
\]
By Theorem~\ref{thm:poisson},
\[
\delta(h^\pi)\ \le\ \sum_{t=0}^{\infty}\big((H^\pi)^\top\big)^t\,b^\pi.
\]
Define
\[
\widehat\delta^{(\kappa)}
\coloneqq
\sum_{t=0}^{\kappa}\big((H^\pi)^\top\big)^t\,b^\pi,
\qquad
R^{(\kappa)}
\coloneqq
\sum_{t=\kappa+1}^{\infty}\big((H^\pi)^\top\big)^t\,b^\pi.
\]
Then immediately
\[
\delta(h^\pi)\ \le\ \widehat\delta^{(\kappa)}+R^{(\kappa)},
\]
which proves the certificate statement.

For the tail bound,
\[
\|R^{(\kappa)}\|_\infty
\ \le\
\sum_{t=\kappa+1}^{\infty}
\big\|\big((H^\pi)^\top\big)^t\big\|_{\infty\to\infty}\,\|b^\pi\|_\infty
\ \le\
\sum_{t=\kappa+1}^{\infty}
C\lambda^t\,\|b^\pi\|_\infty
\ =\
\frac{C}{1-\lambda}\lambda^{\kappa+1}\|b^\pi\|_\infty.
\]

For locality, let $G_H^\pi$ be the support graph of $H^\pi$ (edge $i\to j$ when $H^\pi_{j\leftarrow i}>0$). Expanding the matrix power,
\[
\Big[\big((H^\pi)^\top\big)^t b^\pi\Big]_i
\ =\
\sum_{i=k_0,k_1,\dots,k_t}
H^\pi_{k_1\leftarrow k_0}\,H^\pi_{k_2\leftarrow k_1}\,\cdots\,H^\pi_{k_t\leftarrow k_{t-1}}\,b^\pi_{k_t},
\]
which sums over directed length-$t$ paths $k_0=i\to k_1\to\cdots\to k_t$ in $G_H^\pi$. The contribution is nonzero only when every $H^\pi_{k_{\ell+1}\leftarrow k_\ell}$ is positive, i.e.\ when $k_t$ is reachable from $i$ in $t$ steps. Hence the truncated sum $\widehat\delta_i^{(\kappa)}$ depends only on entries of $H^\pi$ and $b^\pi$ supported on the $\kappa$-hop \emph{out-ball} of $i$ in $G_H^\pi$ (agents reachable from $i$ in at most $\kappa$ directed edges), and it can be computed by $\kappa$ rounds of message passing along the reverse edges of that ball.

For the oracle truncation bias, define
\[
\widehat h_\kappa^\pi \coloneqq \sum_{t=0}^{\kappa}(T^\pi)^t g^\pi.
\]
From the quotient-space construction used in the proof of Theorem~\ref{thm:poisson},
\[
[h^\pi]
=
\sum_{t=0}^\infty (\widetilde T^\pi)^t [g^\pi],
\qquad
[\widehat h_\kappa^\pi]
=
\sum_{t=0}^{\kappa} (\widetilde T^\pi)^t [g^\pi],
\]
where $\widetilde T^\pi$ denotes the induced operator on $B(\Sset)/\mathrm{span}\{\mathbf 1\}$. Therefore
\[
[\widehat h_\kappa^\pi-h^\pi]
=
-\sum_{t=\kappa+1}^{\infty}(\widetilde T^\pi)^t[g^\pi].
\]
Applying the oscillation bound gives
\[
\delta(\widehat h_\kappa^\pi-h^\pi)
\ \le\
\sum_{t=\kappa+1}^{\infty}\big((H^\pi)^\top\big)^t\,b^\pi
=
R^{(\kappa)}.
\]

Now use the standard identity
\[
\inf_{c\in\mathbb R}\|f-c\mathbf 1\|_\infty
=
\frac12\,\mathrm{osc}(f),
\qquad
\mathrm{osc}(f)\le \sum_{i=1}^n \delta_i(f)=\|\delta(f)\|_1.
\]
Applying this to $f=\widehat h_\kappa^\pi-h^\pi$ yields
\[
B_\kappa^\pi
=
\inf_{c\in\mathbb R}\|\widehat h_\kappa^\pi-h^\pi-c\mathbf 1\|_\infty
\ \le\
\frac12\,\|\delta(\widehat h_\kappa^\pi-h^\pi)\|_1
\ \le\
\frac12\sum_{i=1}^n R_i^{(\kappa)}.
\]
Finally,
\[
\sum_{i=1}^n R_i^{(\kappa)}
\le
n\|R^{(\kappa)}\|_\infty
\le
\frac{nC}{1-\lambda}\lambda^{\kappa+1}\|b^\pi\|_\infty,
\]
which gives
\[
B_\kappa^\pi
\le
\frac{nC}{2(1-\lambda)}\lambda^{\kappa+1}\|b^\pi\|_\infty.
\]
This completes the proof.

\subsection{Preliminaries for the block-improvement step}

Throughout this subsection, fix a baseline product policy $\pi$ and write
\[
A^\pi(s,a)
=
r(s,a)-\bar r^\pi+\sum_y P(y\mid s,a)h^\pi(y)-h^\pi(s).
\]

\begin{lemma}[Advantage perturbation via value perturbation modulo constants]
\label{lem:adv-perturb}
Let $\widehat h$ be any function on $\Sset$ and define
\[
\widehat A(s,a)
=
r(s,a)-\bar r^\pi+\sum_y P(y\mid s,a)\widehat h(y)-\widehat h(s).
\]
Then
\[
\sup_{s\in\Sset,\ a\in\Aset} |\widehat A(s,a)-A^\pi(s,a)|
\ \le\
2\inf_{c\in\mathbb R}\|\widehat h-h^\pi-c\mathbf 1\|_\infty.
\]
\end{lemma}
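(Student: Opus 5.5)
The argument is direct: the advantage depends on $\widehat h$ only through the affine map $\widehat h\mapsto P\widehat h-\widehat h$, and this map annihilates constants. Set $e:=\widehat h-h^\pi$. Subtracting the two defining expressions, the terms $r(s,a)-\bar r^\pi$ cancel, which leaves
\[
\widehat A(s,a)-A^\pi(s,a)=\sum_y P(y\mid s,a)\,e(y)-e(s).
\]

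Next, fix an arbitrary $c\in\R$ and replace $e$ by $e-c\mathbf 1$. Since $P(\cdot\mid s,a)$ is a probability distribution, $\sum_y P(y\mid s,a)\,c=c$, so the right-hand side is unchanged:
\[
\widehat A(s,a)-A^\pi(s,a)=\sum_y P(y\mid s,a)\,(e-c\mathbf 1)(y)-(e-c\mathbf 1)(s).
\]
Apply the triangle inequality. The first term is a convex combination of values of $e-c\mathbf 1$, so it is bounded in absolute value by $\|e-c\mathbf 1\|_\infty$. The second term is bounded by the same quantity. Hence
\[
|\widehat A(s,a)-A^\pi(s,a)|\le 2\|e-c\mathbf 1\|_\infty
\]
uniformly in $(s,a)$.

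Finally, take the supremum over $(s,a)$ on the left and the infimum over $c$ on the right; this gives the claim.

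No step is a genuine obstacle. The only point to get right is that constant shifts cancel, which uses both that $P(\cdot\mid s,a)$ sums to one and that the same $\widehat h$ appears in both the $\sum_y P\,\widehat h$ and $-\widehat h(s)$ terms. This is exactly what makes the bound depend only on $\widehat h-h^\pi$ modulo constants, so it can later be combined with the bias bound $B^\pi_\kappa$ of Theorem~\ref{thm:local-eval}.
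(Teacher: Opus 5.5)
Your proof is correct and follows the paper's argument: cancel the $r-\bar r^\pi$ terms, use that $P(\cdot\mid s,a)$ sums to one so an arbitrary constant shift drops out, bound the two remaining terms by the sup norm, then take the supremum over $(s,a)$ and the infimum over $c$. Your bookkeeping with $e-c\mathbf 1$ is slightly cleaner than the paper's, which shifts by $+c$ and then states the bound with $-c$; that mismatch is harmless because $c$ ranges over all of $\R$.
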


\begin{proof}
For any constant $c\in\mathbb R$, replacing $\widehat h$ by $\widehat h+c\mathbf 1$ does not change $\widehat A$, because the additive constant cancels between the transition term and the state term. Hence, for every $c$,
\begin{align*}
|\widehat A(s,a)-A^\pi(s,a)|
&=
\left|
\sum_y P(y\mid s,a)\big(\widehat h(y)+c-h^\pi(y)\big)
-
\big(\widehat h(s)+c-h^\pi(s)\big)
\right| \\
&\le
\sum_y P(y\mid s,a)\,|\widehat h(y)+c-h^\pi(y)|
+
|\widehat h(s)+c-h^\pi(s)| \\
&\le
2\|\widehat h-h^\pi-c\mathbf 1\|_\infty.
\end{align*}
Taking the supremum over $(s,a)$ and then the infimum over $c$ proves the claim.
\end{proof}

\begin{lemma}[Per-state KL-prox duality]
\label{lem:prox-duality}
Fix a finite action set $\mathcal A$, a reference distribution $q\in\Delta(\mathcal A)$ with $q(a)>0$ for all $a$ (or, equivalently, restrict the maximization to distributions absolutely continuous with respect to $q$), a score vector $g\in\mathbb R^{\mathcal A}$, and $\eta>0$. Then
\[
\max_{p\in\Delta(\mathcal A)}
\Big\{
\langle p,g\rangle-\eta\,\KL(p\|q)
\Big\}
=
\eta\log\sum_{a\in\mathcal A} q(a)e^{g(a)/\eta},
\]
attained uniquely at
\[
p^\star(a)\propto q(a)e^{g(a)/\eta}.
\]
Moreover,
\[
\langle p^\star-q,\ g\rangle \ \ge\ \eta\,\KL(p^\star\|q).
\]
\end{lemma}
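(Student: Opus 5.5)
The plan is a direct Gibbs variational argument, restricted to distributions $p$ that are absolutely continuous with respect to $q$ (automatic when $q$ has full support). Set $Z\coloneqq\sum_a q(a)e^{g(a)/\eta}$ and $p^\star(a)\coloneqq q(a)e^{g(a)/\eta}/Z$. Since $q>0$, the distribution $p^\star$ is well defined and has full support. For such $p$, expanding the logarithm gives the identity
\[
\langle p,g\rangle-\eta\,\KL(p\|q)=\eta\log Z-\eta\,\KL(p\|p^\star).
\]
To see this, write $g(a)=\eta\log\big(p^\star(a)/q(a)\big)+\eta\log Z$. Then $\langle p,g\rangle=\eta\sum_a p(a)\log\big(p^\star(a)/q(a)\big)+\eta\log Z$, and subtracting $\eta\sum_a p(a)\log\big(p(a)/q(a)\big)$ leaves exactly $-\eta\sum_a p(a)\log\big(p(a)/p^\star(a)\big)$. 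Gibbs' inequality says $\KL(p\|p^\star)\ge 0$, with equality iff $p=p^\star$. This yields both the value $\eta\log Z$ and the uniqueness of the maximizer at once.

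For the \emph{moreover} inequality, evaluate the objective at $p=q$ and at $p=p^\star$. Because $p^\star$ maximizes the objective,
\[
\langle p^\star,g\rangle-\eta\,\KL(p^\star\|q)\ \ge\ \langle q,g\rangle-\eta\,\KL(q\|q)=\langle q,g\rangle.
\]
Rearranging gives $\langle p^\star-q,g\rangle\ge\eta\,\KL(p^\star\|q)$.

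An alternative route yields the stronger bound $\langle p^\star-q,g\rangle=\eta\,\KL(p^\star\|q)+\eta\,\KL(q\|p^\star)$. Plug $p=q$ into the identity to get $\langle q,g\rangle=\eta\log Z-\eta\,\KL(q\|p^\star)$. Plug $p=p^\star$ to get $\langle p^\star,g\rangle-\eta\,\KL(p^\star\|q)=\eta\log Z$. Subtracting the two gives the three-point form, and dropping the nonnegative term $\eta\,\KL(q\|p^\star)$ recovers the stated inequality.

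No step is a genuine obstacle. The only care point is the convention $0\log 0=0$, together with the requirement that $p\ll q$ so that $\KL(p\|q)<\infty$. With full-support $q$ this holds for every $p$. If $p\not\ll q$, the objective equals $-\infty$, so such $p$ never attain the maximum and the restriction costs nothing.
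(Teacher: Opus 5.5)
Your proof is correct, and for the main claim it takes a different route from the paper. The paper gets the form of $p^\star$ from Lagrangian stationarity, the optimal value by direct substitution, and uniqueness from strict convexity of $\KL(\cdot\|q)$. You instead use the exact identity $\langle p,g\rangle-\eta\KL(p\|q)=\eta\log Z-\eta\KL(p\|p^\star)$ together with Gibbs' inequality.

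Your route gives the value, the maximizer and its uniqueness in one step. It also avoids what the Lagrangian sketch leaves implicit: that the maximizer lies in the relative interior of the simplex, where the gradient condition applies, and that stationarity is sufficient because the objective is concave.

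Your identity also gives the exact suboptimality gap $\eta\KL(p\|p^\star)$ for every $p$. That is what produces your three-point equality $\langle p^\star-q,g\rangle=\eta\KL(p^\star\|q)+\eta\KL(q\|p^\star)$, which is strictly sharper than the stated inequality whenever $p^\star\neq q$. The paper proves the \emph{moreover} part exactly as in your first argument, by comparing the objective at $p^\star$ and at $q$, so it only obtains the inequality. If your sharper form were carried into the block-improvement theorem, it would add the term $\eta\,\E_{S\sim d^\mu}[\KL(\pi_k(\cdot\mid S)\|\mu_k(\cdot\mid S))]$ to the lower bound.

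The paper's calculus derivation has the modest advantage of showing where the Gibbs form $p^\star\propto q\,e^{g/\eta}$ comes from, rather than positing it and verifying it.
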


\begin{proof}
The maximization is the standard log-sum-exp / Donsker--Varadhan variational formula. The Lagrangian for the constraint $\sum_a p(a)=1$ has gradient $g(a)-\eta\log(p(a)/q(a))-\eta-\lambda=0$, which gives $p(a)\propto q(a)e^{g(a)/\eta}$. The value of the objective at $p^\star$ is $\eta\log\sum_a q(a)e^{g(a)/\eta}$ by direct substitution, and uniqueness follows from strict convexity of $\KL(\cdot\|q)$ on the simplex.

For the final inequality, evaluate the objective at $p=q$: $\langle q,g\rangle-\eta\KL(q\|q)=\langle q,g\rangle$. By optimality of $p^\star$,
\[
\langle p^\star,g\rangle-\eta\KL(p^\star\|q)\ \ge\ \langle q,g\rangle,
\]
which rearranges to $\langle p^\star-q,g\rangle\ge \eta\KL(p^\star\|q)$.
\end{proof}

\begin{lemma}[One-block performance difference]
\label{lem:block-pdl}
Let $\pi$ be a product policy and let $\mu$ be another product policy such that $\mu_{-k}=\pi_{-k}$ for some agent $k$. Assume the Markov chain under $\mu$ is irreducible with stationary distribution $d^\mu$. Define
\[
g_{k,\star}^\pi(s,a_k)
\coloneqq
\E_{a_{-k}\sim \prod_{j\ne k}\pi_j(\cdot\mid s)}
\big[A^\pi(s,(a_k,a_{-k}))\big].
\]
Then
\[
\bar r(\mu)-\bar r(\pi)
=
\E_{S\sim d^\mu}
\Big[
\langle
\mu_k(\cdot\mid S)-\pi_k(\cdot\mid S),\
g_{k,\star}^\pi(S,\cdot)
\rangle
\Big].
\]
\end{lemma}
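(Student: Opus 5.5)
The plan is the standard average-reward performance difference argument, specialized to a one-block change. Let $h^\pi$ solve the Poisson equation for $\pi$; such a solution exists by Theorem~\ref{thm:poisson} or by irreducibility. By the definition of $A^\pi$, for every $s$ we have
\[
\sum_a \mu(a\mid s)A^\pi(s,a)=r^\mu(s)-\bar r^\pi+(T^\mu h^\pi)(s)-h^\pi(s),
\]
where $T^\mu h^\pi(s)=\sum_a\mu(a\mid s)\sum_y P(y\mid s,a)h^\pi(y)$. I would take the expectation of both sides under $S\sim d^\mu$. Stationarity of $d^\mu$ for $P^\mu$ gives $\E_{d^\mu}[T^\mu h^\pi]=\E_{d^\mu}[h^\pi]$, so those two terms cancel. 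Because $h^\pi$ is a bounded function on a finite space, no integrability issue arises. What remains is $\E_{d^\mu}[r^\mu]-\bar r^\pi=\bar r(\mu)-\bar r(\pi)$, and therefore
\[
\bar r(\mu)-\bar r(\pi)=\E_{S\sim d^\mu}\Big[\sum_a\mu(a\mid S)A^\pi(S,a)\Big].
\]

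Next I subtract the $\pi$-average of the advantage, which is zero at every state. Indeed, $\sum_a\pi(a\mid s)A^\pi(s,a)=r^\pi(s)-\bar r^\pi+T^\pi h^\pi(s)-h^\pi(s)=0$ by the Poisson equation itself. This is the step that uses that $h^\pi$ is an exact solution and not merely a solution up to an additive constant. It gives
\[
\bar r(\mu)-\bar r(\pi)=\E_{d^\mu}\Big[\sum_a(\mu(a\mid S)-\pi(a\mid S))A^\pi(S,a)\Big].
\]

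Finally I use the product structure together with $\mu_{-k}=\pi_{-k}$. Then $\mu(a\mid s)-\pi(a\mid s)=(\mu_k(a_k\mid s)-\pi_k(a_k\mid s))\prod_{j\ne k}\pi_j(a_j\mid s)$. Summing over $a_{-k}$ first turns the inner sum into $\langle\mu_k(\cdot\mid s)-\pi_k(\cdot\mid s),g^\pi_{k,\star}(s,\cdot)\rangle$, which is exactly the claimed identity.

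There is no real obstacle here. The only point needing care is the first step: the cancellation must use stationarity of $d^\mu$ under $P^\mu$, not under $P^\pi$. Here $\bar r(\mu)=\E_{d^\mu}[r^\mu]$ is well defined because irreducibility gives a unique $d^\mu$. I would also check that the choice of additive constant in $h^\pi$ is irrelevant, since $A^\pi$ is invariant to shifting $h^\pi$ by a constant.
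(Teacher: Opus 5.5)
Your proposal is correct and follows essentially the same route as the paper: both derive the average-reward performance-difference identity using stationarity of $d^\mu$ under $P^\mu$, then use $\mu_{-k}=\pi_{-k}$ together with the fact that the $\pi$-average of $A^\pi$ vanishes by the Poisson equation. The only difference is the order of operations: you subtract the $\pi$-average before summing out $a_{-k}$, while the paper does it after, and this is purely cosmetic.
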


\begin{proof}
Using the Poisson equation for $h^\pi$,
\[
h^\pi-T^\pi h^\pi=r^\pi-\bar r^\pi,
\]
one obtains the standard average-reward performance-difference identity
\[
\bar r(\mu)-\bar r(\pi)
=
\E_{S\sim d^\mu,\ A\sim \mu(\cdot\mid S)}
\big[A^\pi(S,A)\big].
\]
Indeed,
\begin{align*}
\E_{d^\mu,\mu}[A^\pi]
&=
\E_{d^\mu,\mu}\!\left[
r(S,A)-\bar r^\pi+\sum_y P(y\mid S,A)h^\pi(y)-h^\pi(S)
\right] \\
&=
\E_{d^\mu,\mu}[r(S,A)]-\bar r^\pi
+
\E_{S'\sim d^\mu}[h^\pi(S')]
-
\E_{S\sim d^\mu}[h^\pi(S)] \\
&=
\bar r(\mu)-\bar r(\pi),
\end{align*}
where the middle two terms cancel by stationarity of $d^\mu$ under $\mu$.

Now use $\mu_{-k}=\pi_{-k}$:
\[
\E_{A\sim \mu(\cdot\mid s)}[A^\pi(s,A)]
=
\sum_{a_k}\mu_k(a_k\mid s)\,g_{k,\star}^\pi(s,a_k)
=
\langle \mu_k(\cdot\mid s), g_{k,\star}^\pi(s,\cdot)\rangle.
\]
Likewise,
\[
\langle \pi_k(\cdot\mid s), g_{k,\star}^\pi(s,\cdot)\rangle
=
\E_{A\sim \pi(\cdot\mid s)}[A^\pi(s,A)]
=
0.
\]
Therefore
\[
\E_{A\sim \mu(\cdot\mid s)}[A^\pi(s,A)]
=
\langle
\mu_k(\cdot\mid s)-\pi_k(\cdot\mid s),\,
g_{k,\star}^\pi(s,\cdot)
\rangle.
\]
Averaging over $S\sim d^\mu$ proves the claim.
\end{proof}

\subsection{Proof of Theorem~\ref{thm:block-improve}}

Fix $\pi$, $k$, and $\kappa$. First, by Lemma~\ref{lem:adv-perturb} with $\widehat h=\widehat h_\kappa^\pi$,
\[
\sup_{s,a}
|\widehat A_\kappa^\pi(s,a)-A^\pi(s,a)|
\le
2 B_\kappa^\pi.
\]
Taking expectation over $a_{-k}\sim \prod_{j\ne k}\pi_j(\cdot\mid s)$ preserves the sup norm, so
\[
\|\widehat g_{k,\kappa}^\pi-g_{k,\star}^\pi\|_\infty
\le
2 B_\kappa^\pi.
\]

Now consider the policy $\mu$ defined by
\[
\mu_k(\cdot\mid s)
\propto
\pi_k(\cdot\mid s)\exp\!\big(\widehat g_{k,\kappa}^\pi(s,\cdot)/\eta\big),
\qquad
\mu_{-k}=\pi_{-k}.
\]
For each fixed state $s$, Lemma~\ref{lem:prox-duality} with
\[
q=\pi_k(\cdot\mid s),
\qquad
g=\widehat g_{k,\kappa}^\pi(s,\cdot),
\qquad
p^\star=\mu_k(\cdot\mid s)
\]
gives
\[
\langle
\mu_k(\cdot\mid s)-\pi_k(\cdot\mid s),\
\widehat g_{k,\kappa}^\pi(s,\cdot)
\rangle
\ge
\eta\,\KL\big(\mu_k(\cdot\mid s)\,\|\,\pi_k(\cdot\mid s)\big).
\]
Subtract and add the exact logit:
\begin{align*}
\langle
\mu_k-\pi_k,\
g_{k,\star}^\pi
\rangle
&=
\langle
\mu_k-\pi_k,\
\widehat g_{k,\kappa}^\pi
\rangle
+
\langle
\mu_k-\pi_k,\
g_{k,\star}^\pi-\widehat g_{k,\kappa}^\pi
\rangle \\
&\ge
\eta\,\KL(\mu_k\|\pi_k)
-
\|\mu_k-\pi_k\|_1
\,
\|\widehat g_{k,\kappa}^\pi-g_{k,\star}^\pi\|_\infty \\
&\ge
\eta\,\KL(\mu_k\|\pi_k)
-
2\|\widehat g_{k,\kappa}^\pi-g_{k,\star}^\pi\|_\infty \\
&\ge
\eta\,\KL(\mu_k\|\pi_k)
-
4 B_\kappa^\pi,
\end{align*}
where all distributions are evaluated at the same state $s$ and we used $\|\mu_k-\pi_k\|_1\le 2$.

Finally, average over $S\sim d^\mu$ and apply Lemma~\ref{lem:block-pdl}:
\[
\bar r(\mu)-\bar r(\pi)
=
\E_{S\sim d^\mu}
\Big[
\langle
\mu_k(\cdot\mid S)-\pi_k(\cdot\mid S),\
g_{k,\star}^\pi(S,\cdot)
\rangle
\Big]
\]
\[
\ge
\eta\,\E_{S\sim d^\mu}
\Big[
\KL\big(\mu_k(\cdot\mid S)\,\|\,\pi_k(\cdot\mid S)\big)
\Big]
-
4 B_\kappa^\pi.
\]
Substituting the bound from Theorem~\ref{thm:local-eval},
\[
B_\kappa^\pi
\le
\frac{nC}{2(1-\lambda)}\lambda^{\kappa+1}\|b^\pi\|_\infty,
\]
yields
\[
\bar r(\mu)-\bar r(\pi)
\ge
\eta\,\E_{S\sim d^\mu}
\Big[
\KL\big(\mu_k(\cdot\mid S)\,\|\,\pi_k(\cdot\mid S)\big)
\Big]
-
\frac{2nC}{1-\lambda}\lambda^{\kappa+1}\|b^\pi\|_\infty.
\]
This proves the theorem.

\section{Miscellaneous Results}

\subsection{Example: A Coupled System where Policy Smoothing is Ineffective}
\label{app:policy-ex}

Our framework also captures the failure mode where policy smoothing cannot create locality because the environment itself has strong direct state-to-state feedback. Consider two agents with binary states and arbitrary actions. Suppose the next states deterministically copy each other:
\[
P_1(s'_1=1\mid s,a)=\mathbf 1\{s_2=1\},
\qquad
P_2(s'_2=1\mid s,a)=\mathbf 1\{s_1=1\}.
\]
Actions have no effect on the next state. Hence
\[
E^{\mathrm a}=0,
\qquad
E^{\mathrm s}
=
\begin{pmatrix}
0 & 1\\
1 & 0
\end{pmatrix},
\]
where rows and columns are indexed by the influence convention $j\leftarrow i$. Therefore
\[
H^\pi=E^{\mathrm s}+E^{\mathrm a}\Pi(\pi)=E^{\mathrm s},
\qquad
\rho(H^\pi)=1.
\]
The policy sensitivity matrix $\Pi(\pi)$ is irrelevant because the action-coupling channel is absent. Thus no amount of policy smoothing can make $\rho(H^\pi)<1$ in this example. This illustrates the complementary failure mode to the policy-induced locality examples: if the environment has an unstable direct state-feedback loop, the policy cannot remove it unless actions actually mediate that coupling.

\subsection{Example: policy-induced locality in a nonlocal-looking system.}\label{app:nonlocal-ex}
Consider a hub–and–spoke system with $n\ge3$ agents. 

Let $P_1$ be constant and, for $j>1$,
$P_j(s'_j=1\mid s,a)=\mathbf{1}\{a_1=1\}$. Then $E^{\mathrm s}=0$ and
$E^{\mathrm a}_{j\leftarrow 1}=1$ for all $j>1$ (others zero). A policy-independent,
action-supremum certificate declares the system non-local ($\|C\|_\infty=1$) \citep{qu_scalabale_marl_2020}.

Under a temperature-$\tau$ softmax for agent~1 with logit Lipschitz constants $\{L_{1\leftarrow i}\}$,
Lemma~\ref{lem:Pi-softmax} gives $\Pi_{1\leftarrow i}\le L_{1\leftarrow i}/(2\tau)$ and hence
\[
\rho(H^\pi)\ \le\ \rho\!\Big(E^{\mathrm a}\frac{L}{2\tau}\Big)
\ =\ \frac{1}{2\tau}\sum_{i>1}L_{1\leftarrow i}
\ \le\ \frac{(n-1)L_{\max}}{2\tau},
\]
Thus locality is certified whenever $\tau>(n-1)L_{\max}/2$, and if the hub ignores its inputs ($L\equiv0$) then $\rho(H^\pi)=0$.
This illustrates how locality can be \emph{policy-induced}, while worst-case, policy-independent tests cannot detect it.

\subsection{Properties of the Coordinatewise Oscillation Seminorm}
\label{app:oscillation_proof}

\newcommand{\norm}[1]{\left\lVert #1 \right\rVert}

Let $V = B(\Sset)$ be the vector space of all bounded real-valued functions on the finite state space $\Sset = \prod_{i=1}^n \Sset_i$.

\begin{definition}[Coordinatewise Oscillation]
For a function $f \in V$ and a coordinate $i \in [n]$, the $i$-oscillation is:
\[
\delta_i(f) = \sup\Big\{\,|f(x)-f(y)|: x,y\in\Sset,\ x_{-i}=y_{-i}\,\Big\}.
\]
We define the function $p: V \to \R$ as the maximum oscillation:
\[
p(f) = \norm{\delta(f)}_\infty = \max_{i \in [n]} \delta_i(f).
\]
\end{definition}

\begin{proposition}
The function $p(f) = \norm{\delta(f)}_\infty$ is a seminorm on the vector space $V$.
\end{proposition}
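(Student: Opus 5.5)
The plan is to verify the three seminorm axioms for $p(f)=\max_i\delta_i(f)$ directly from the definition of $\delta_i$. First I record that $p$ is well defined and finite: $\Sset$ is finite, so every $f\in V$ is bounded and each $\delta_i(f)\le 2\|f\|_\infty<\infty$. Nonnegativity is immediate, since each $\delta_i(f)$ is a supremum of absolute values over a nonempty index set (take $x=y$), so $\delta_i(f)\ge 0$ and hence $p(f)\ge 0$. It is also worth noting that $p(c\mathbf 1)=0$ for constants, so $p$ is only a seminorm and not a norm; this is consistent with its use on the quotient by $\mathrm{span}\{\mathbf 1\}$ in the proof of Theorem~\ref{thm:poisson}.

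Next I check absolute homogeneity coordinatewise. For $\alpha\in\R$ and $x,y$ with $x_{-i}=y_{-i}$ we have $|(\alpha f)(x)-(\alpha f)(y)|=|\alpha|\,|f(x)-f(y)|$. Taking the supremum over the same index set gives $\delta_i(\alpha f)=|\alpha|\delta_i(f)$, and taking the maximum over $i$ then gives $p(\alpha f)=|\alpha|p(f)$. The case $\alpha=0$ is covered, since both sides vanish.

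For subadditivity I again work coordinatewise. For admissible pairs $(x,y)$ with $x_{-i}=y_{-i}$, the triangle inequality in $\R$ gives
\[
|(f+g)(x)-(f+g)(y)|\le |f(x)-f(y)|+|g(x)-g(y)|\le \delta_i(f)+\delta_i(g).
\]
Taking the supremum over such pairs yields $\delta_i(f+g)\le\delta_i(f)+\delta_i(g)$, that is, $\delta(f+g)\le\delta(f)+\delta(g)$ entrywise. This entrywise form is the version used later, in the proof of Theorem~\ref{thm:poisson}, to bound $\delta(h^\pi)$ by a series. Maximizing over $i$ and using $\max_i(u_i+v_i)\le\max_i u_i+\max_i v_i$ gives $p(f+g)\le p(f)+p(g)$.

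None of these steps is a genuine obstacle. The only points needing care are that the suprema are finite and taken over a nonempty set, so no $\infty-\infty$ issues arise. I would also add the converse remark that $p(f)=0$ forces $f$ to be constant, since any two states are joined by a chain of single-coordinate changes. This fact is exactly what makes the induced $\|[\cdot]\|_\delta$ a genuine norm on the quotient space in Theorem~\ref{thm:poisson}.
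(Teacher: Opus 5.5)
Your proposal is correct and follows the paper's proof essentially line for line: both verify non-negativity, absolute homogeneity, and subadditivity coordinatewise for $\delta_i$, then pass to the maximum over $i$. Your extra remarks, that the suprema are finite and that $p(f)=0$ forces $f$ to be constant, agree with the paper's accompanying remark and are sound.
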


\begin{proof}
To prove that $p(f)$ is a seminorm, we must verify three properties:
\begin{enumerate}
    \item \textbf{Non-negativity:} $p(f) \ge 0$ for all $f \in V$.
    \item \textbf{Absolute Homogeneity:} $p(c f) = |c| p(f)$ for all $f \in V$ and scalar $c \in \R$.
    \item \textbf{Subadditivity (Triangle Inequality):} $p(f + g) \le p(f) + p(g)$ for all $f, g \in V$.
\end{enumerate}

\textbf{1. Non-negativity:}
The absolute value $|f(x)-f(y)|$ is always non-negative. The supremum of a set of non-negative numbers, $\delta_i(f)$, is also non-negative. The maximum of a set of non-negative numbers, $p(f)$, is therefore non-negative.

\textbf{2. Absolute Homogeneity:}
For any $f \in V$ and $c \in \R$:
\begin{align*}
\delta_i(c f) &= \sup_{x_{-i}=y_{-i}} |(c f)(x) - (c f)(y)| \\
&= \sup_{x_{-i}=y_{-i}} |c \cdot (f(x) - f(y))| \\
&= |c| \cdot \sup_{x_{-i}=y_{-i}} |f(x) - f(y)| = |c| \cdot \delta_i(f).
\end{align*}
Taking the maximum over all $i$:
\[
p(c f) = \max_{i \in [n]} \delta_i(c f) = \max_{i \in [n]} \big( |c| \cdot \delta_i(f) \big) = |c| \cdot \max_{i \in [n]} \delta_i(f) = |c| \cdot p(f).
\]

\textbf{3. Subadditivity:}
For any $f, g \in V$:
\begin{align*}
\delta_i(f + g) &= \sup_{x_{-i}=y_{-i}} |(f+g)(x) - (f+g)(y)| \\
&= \sup_{x_{-i}=y_{-i}} |(f(x) - f(y)) + (g(x) - g(y))| \\
&\le \sup_{x_{-i}=y_{-i}} \big( |f(x) - f(y)| + |g(x) - g(y)| \big) \quad \text{(by the triangle inequality for $\R$)} \\
&\le \sup_{x_{-i}=y_{-i}} |f(x) - f(y)| + \sup_{x_{-i}=y_{-i}} |g(x) - g(y)| \quad \text{(by a standard property of suprema)} \\
&= \delta_i(f) + \delta_i(g).
\end{align*}
Now, taking the maximum over all $i$:
\[
p(f + g) = \max_{i \in [n]} \delta_i(f + g) \le \max_{i \in [n]} (\delta_i(f) + \delta_i(g)).
\]
For any $i$, we know $\delta_i(f) \le \max_j \delta_j(f) = p(f)$ and $\delta_i(g) \le \max_j \delta_j(g) = p(g)$. Thus:
\[
p(f + g) \le \max_{i \in [n]} (p(f) + p(g)) = p(f) + p(g).
\]
This completes the proof.
\end{proof}

\begin{remark}[Why it is a seminorm, not a norm]
A norm requires $p(f) = 0 \iff f = 0$. For our $p(f)$, if $f(x) = c$ for some non-zero constant $c$, then $f \ne 0$. However, for any $i$ and any pair $x,y$ with $x_{-i}=y_{-i}$, $f(x) = c$ and $f(y) = c$, so $|f(x)-f(y)| = 0$. This implies $\delta_i(f) = 0$ for all $i$, and thus $p(f) = 0$. Since $p(f) = 0$ for non-zero constant functions, $p(f)$ is a seminorm. In fact, $p(f) = 0 \iff f \text{ is a constant function.}$
\end{remark}

\begin{proposition}
The value $p(f) = \norm{\delta(f)}_\infty$ is the (best) Lipschitz constant of $f$ with respect to the Hamming distance $d_H(\cdot, \cdot)$ on $\Sset$.
\end{proposition}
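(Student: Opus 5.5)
We need to show two things: that $p(f)$ is a valid Lipschitz constant for $f$ with respect to $d_H$, and that no smaller constant works. Here the Lipschitz constant is $\mathrm{Lip}_H(f)=\sup_{x\neq y}|f(x)-f(y)|/d_H(x,y)$.

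\emph{Upper bound.} Fix $x\neq y$ and let $D=\{i: x_i\neq y_i\}$, so $|D|=d_H(x,y)$. Build a chain $Z_0=x, Z_1,\dots,Z_{|D|}=y$ by replacing the coordinates in $D$ with those of $y$, one at a time and in some fixed order. This is the same telescoping device used in the proof of Lemma~\ref{lem:one-step}. Consecutive points $Z_{\ell-1},Z_\ell$ agree off a single coordinate $i_\ell\in D$, so by definition $|f(Z_{\ell-1})-f(Z_\ell)|\le\delta_{i_\ell}(f)\le p(f)$. The triangle inequality then gives
\[
|f(x)-f(y)|\ \le\ \sum_{i\in D}\delta_i(f)\ \le\ p(f)\,d_H(x,y).
\]
Hence $f$ is $p(f)$-Lipschitz, and so $\mathrm{Lip}_H(f)\le p(f)$.

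\emph{Optimality.} Suppose $f$ is $L$-Lipschitz with respect to $d_H$. Any pair $x,y$ with $x_{-i}=y_{-i}$ and $x\neq y$ has $d_H(x,y)=1$, so $|f(x)-f(y)|\le L$. Pairs with $x=y$ contribute $0$. Taking the supremum over such pairs gives $\delta_i(f)\le L$ for every $i$, and maximizing over $i$ gives $p(f)\le L$. In particular $p(f)\le \mathrm{Lip}_H(f)$. Since $\Sset$ is finite, all suprema are maxima, so $\mathrm{Lip}_H(f)=p(f)$ exactly.

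There is no real obstacle. The only point requiring care is to make explicit the convention that the supremum defining $\delta_i$ includes only pairs differing exactly in coordinate $i$ (or equal), so that those pairs are at Hamming distance at most one. The chain argument then gives the sharper bound $|f(x)-f(y)|\le\sum_{i\in D}\delta_i(f)$, which is worth recording because it is what the proof of Theorem~\ref{thm:local-eval} uses via $\mathrm{osc}(f)\le\|\delta(f)\|_1$.
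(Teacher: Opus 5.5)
Your proposal is correct and follows essentially the same route as the paper: a one-coordinate-at-a-time chain with telescoping gives the upper bound, and restricting to pairs with $x_{-i}=y_{-i}$ gives $\delta_i(f)\le L$ for optimality. The differences are minor tidy-ups: you merge the paper's two distance cases, you handle $x=y$ pairs explicitly, and you note the refinement $|f(x)-f(y)|\le\sum_{i\in D}\delta_i(f)$, which is the bound the paper later uses as $\mathrm{osc}(f)\le\|\delta(f)\|_1$.
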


\begin{proof}
Let $K = p(f)$. We must show that for any $x, z \in \Sset$, $|f(x) - f(z)| \le K \cdot d_H(x, z)$.

\textbf{Case 1: $d_H(x, z) = 1$.}
If $d_H(x, z) = 1$, then $x$ and $z$ differ in exactly one coordinate, say $j$. This means $x_{-j} = z_{-j}$. By the definition of $\delta_j(f)$:
\[
|f(x) - f(z)| \le \sup_{x'_{-j}=y'_{-j}} |f(x') - f(y')| = \delta_j(f).
\]
By the definition of $p(f)$, $\delta_j(f) \le \max_i \delta_i(f) = p(f) = K$.
Therefore, $|f(x) - f(z)| \le K = K \cdot d_H(x, z)$. This also shows that $K$ is the best Lipschitz constant for pairs at distance 1.

\textbf{Case 2: $d_H(x, z) = k > 1$.}
Let the coordinates where $x$ and $z$ differ be $I = \{i_1, \dots, i_k\}$. We can construct a path from $x$ to $z$ by changing one coordinate at a time. Let $z^{(0)} = x$, and let $z^{(t)}$ be the state obtained by changing the first $t$ coordinates in $I$ from their values in $x$ to their values in $z$.
For example, $z^{(1)}$ is identical to $x$ except $z^{(1)}_{i_1} = z_{i_1}$.
In general, $z^{(t)}$ and $z^{(t+1)}$ differ only in coordinate $i_{t+1}$. Thus, $d_H(z^{(t)}, z^{(t+1)}) = 1$.
The full path is $x = z^{(0)}, z^{(1)}, \dots, z^{(k)} = z$.

Using the triangle inequality, we ``telescope" the sum:
\begin{align*}
|f(x) - f(z)| &= |f(z^{(0)}) - f(z^{(k)})| \\
&= \left| \sum_{t=0}^{k-1} \big( f(z^{(t)}) - f(z^{(t+1)}) \big) \right| \\
&\le \sum_{t=0}^{k-1} |f(z^{(t)}) - f(z^{(t+1)})|.
\end{align*}
For each term in the sum, $z^{(t)}$ and $z^{(t+1)}$ differ in exactly one coordinate, so $d_H(z^{(t)}, z^{(t+1)}) = 1$. From Case 1, we know:
\[
|f(z^{(t)}) - f(z^{(t+1)})| \le p(f) = K.
\]
Substituting this into the sum:
\[
|f(x) - f(z)| \le \sum_{t=0}^{k-1} K = k \cdot K.
\]
Since $k = d_H(x, z)$, we have shown:
\[
|f(x) - f(z)| \le K \cdot d_H(x, z).
\]
This holds for all $x, z \in \Sset$, so $p(f)$ is the Lipschitz constant of $f$ w.r.t. $d_H$.

\paragraph{Optimality.}
Let L be any constant such that $|f(x)-f(y)| \le L$, $d_H(x,y)$ for all $x,y$.
Fix $i$ and $x,y$ with $x_{-i}=y_{-i}$; then $d_H(x,y)=1$, so $|f(x)-f(y)| \le L$.
Taking the supremum over such pairs gives $\delta_i(f) \le L$, hence
$p(f)=\max_i \delta_i(f) \le L$. Therefore $p(f)$ is the least Lipschitz constant.
\end{proof}

\end{document}